\renewcommand{\cite}[1]{\citep{#1}}
\definecolor{mydarkblue}{rgb}{0,0.08,0.45}
\definecolor{urlcolor}{rgb}{0,.145,.698}
\definecolor{linkcolor}{rgb}{.71,0.21,0.01}
\title{Weakly Supervised Recovery of Semantic Attributes}
\author{%
  Ameen Ali \\
  School of Computer Science\\
  Tel Aviv University\\
  Tel Aviv, Israel\\
  \texttt{ameenali023@gmail.com} \\
   \And
   Tomer Galanti \\
   School of Computer Science \\
 Tel Aviv University\\
  Tel Aviv, Israel\\
   \texttt{tomerga2@mail.tau.ac.il} \\
   \AND
   Evgenii Zheltonozhskii \\
   Department of Computer Science\\
   Technion\\
   Haifa, Israel\\
   \texttt{evgeniizh@campus.technion.ac.il} \\
   \And
   Chaim Baskin \\
   Department of Computer Science\\
   Technion\\
   Haifa, Israel\\
   \texttt{chaimbaskin@cs.technion.ac.il} \\
   \And
   Lior Wolf \\
   School of Computer Science \\
 Tel Aviv University\\
  Tel Aviv, Israel\\
   \texttt{wolf@cs.tau.ac.il} \\
} 
\begin{document}

\maketitle

\begin{abstract}
We consider the problem of the extraction of semantic attributes, supervised only with classification labels. For example, when learning to classify images of birds into species, we would like to observe the emergence of features that zoologists use to classify birds. To tackle this problem, we propose training a neural network with discrete features in the last layer, which is followed by two heads: a multi-layered perceptron (MLP) and a decision tree. Since decision trees utilize simple binary decision stumps we expect those discrete features to obtain semantic meaning. We present a theoretical analysis as well as a practical method for learning in the intersection of two hypothesis classes. Our results on multiple benchmarks show an improved ability to extract a set of features that are highly correlated with the set of unseen attributes.
\end{abstract}

\section{Introduction}

The extraction of meaningful intermediate features in classification problems is at the heart of many scientific domains. Such features allow experts to discuss a certain classification outcome and help understand the underlying structure of the problem. For example, in botany, the species are often identified based on dichotomous keys related to the shape of the leaf, the texture of the fruit, etc.~\cite{herbarium2013jepson}. In paleography, the manuscript is dated or attributed to a specific scribe, based, for example, on specific characteristics of the morphology of the letters~\cite{kups2978}. 

The attributes we discuss are binary and need to be both evidence-based and distinctive. The first property means that there is a mapping $f$, such that, every input $x$ is mapped to a vector $f(x)$ of binary values, indicating the presence of each attribute. The second property means that there should be simple rules that determine the class label $y(x)$ based on the obtained attributes $f(x)$.

Currently, deep neural networks (DNNs) are the most successful methodology for obtaining attributes from images~\cite{zhang2014panda,xu2020generating} and for classification. However, the interpretation of the DNN is a very challenging problem. In contrast, decision trees with binary decision stumps provide simple and interpretable decision rules based on attributes~\cite{guidotti2018survey}. 
We propose to merge two approaches by training DNN to produce quantized representations which are suitable for classification both by an MLP and by a decision tree. 

To perform this hybrid learning task, we provide a theoretical analysis of the problem of learning at the intersection of two hypothesis classes. We study the optimization dynamics of training two hypotheses from two different classes of functions. The first one is trained to minimize its distance from the second one, and the second one is trained to minimize its distance from the first one and the target function. We call this process {\em Intersection Regularization}, as it regularizes the second hypothesis to be close to the first hypothesis class. We theoretically discuss the conditions on the loss surface, for which this process converges to an equilibrium point or local minima. 

The method that we develop for concurrently training a network and a tree is based on the proposed analysis. We learn a quantized representation of the data and two classifiers on top of this representation: a decision tree and MLP. The two classifiers are trained using intersection regularization. Since the human-defined attributes are usually sparse, we also apply $L_1$ regularization on the quantized vector of activations.

In an extensive set of experiments, we demonstrate that discrete representations along with decision stumps learned with the proposed method are highly correlated with a set of unseen human-defined attributes. At the same time, the overall classification accuracy is only slightly reduced compared to standard cross-entropy training.  

The central contributions in this paper are as follows:
    (i) We identify a novel learning problem that we call {\em Weakly Unsupervised Recovery of Semantic Attributes}. In this setting, the samples are associated with abstract binary attributes and are labeled by class membership. The algorithm is provided with labeled samples and is judged by the ability to recover the binary attributes without any access to them. We provide concrete measures for testing the success of a given method for this task. (ii) We introduce a method for recovering the semantic attributes. This method is based on a novel regularization process for regularizing a neural network using a decision tree. We call this regularization method, {\em `Intersection Regularization'}, and (iii) We theoretically study the convergence guarantees of the new regularization method to equilibrium points and local minima.

\section{Related Work}

We list various approaches for learning explainable models in the literature and discuss their similarities and dissimilarities with our approach.

{\bf Interpretability\quad} Developing tools and techniques to better interpret existing deep learning based approaches and to that end building explainable machine learning algorithms, is a fast-growing field of research. In computer vision, most contributions are concerned with providing an output relevance map. These methods include saliency-based methods~\cite{simonyan2013deep,zeiler2014visualizing,mahendran2016visualizing,zhou2016learning,dabkowski2017real,zhou2018interpreting,gur2020visualization}, {Activation Maximization}~\cite{erhan2009visualizing} and Excitation Backprop~\cite{zhang2018top}, perturbation-based methods~\cite{fong2017interpretable,fong2019understanding}. Shapley-value-based methods~\cite{lundberg2017unified} enjoy theoretical justification and the Deep Taylor Decomposition~\cite{montavon2017explaining} provides a derivation that is also applicable to Layer-wise Relevance Propagation (LRP)~\cite{bach2015pixel} and its variants~\citep{gu2018understanding,iwana2019explaining,nam2019relative} presented RAP. Gradient based methods are based on the gradient with respect to the layer's input feature map and include  Gradient*Input~\citep{shrikumar2016not}, Integrated Gradients~\citep{sundararajan2017axiomatic}, 
SmoothGrad~\citep{smilkov2017smoothgrad}, FullGrad~\citep{srinivas2019full} and Grad-CAM ~\citep{selvaraju2017grad}. 

Methods that provide an output relevance map suffer from several notable disadvantages. First, many of these methods were shown to suffer from a bias toward image edges and fail sanity checks that link their outcome to the classifier~\cite{asano2019critical}. {Furthermore, even though these methods are useful for visualization and downstream tasks, such as weakly supervised segmentation~\citep{li2018tell}, it is not obvious how to translate the image maps produced by these methods into semantic attributes, i.e., extract meaning from a visual depiction.} 
{An additional disadvantage of this approach is that it does not provide a direct method for evaluating whether a given neural network is interpretable or not and the evaluation is often done with related tasks, such as segmentation or on measuring the classifier's sensitivity to regions or pixels that were deemed as being important to the classification outcome or not. In our framework, we suggest objective measures in which one can quantify the degree of interpretability of a given model.}

Since linear models are intuitively considered interpretable, a framework in which the learned features are monotonic and additive has been developed~\cite{alvarez2018}. The explanation takes the form of presenting each attribute's contribution while explaining the attributes using prototype samples. Unlike our framework, no attempt was made to validate that the obtained attributes correspond to a predefined list of semantic attributes.

Local interpretability models, such as LIME~\cite{Ribeiro:2016:WIT:2939672.2939778}, approximate the decision surface for each specific decision by a linear model and are not aimed at extracting meaningful attributes as in our method. Besides, such models are known to be sensitive to small perturbations of the input~\cite{alvarez2018robustness,yeh2019sensitive}. 


{\bf Fine-Grained Classification\quad} Fine-grained classification aims at differentiating subordinate classes of a common superior class. Those subordinate classes are usually defined by domain experts, based on complicated rules, which typically focus on subtle differences in particular regions. Because of the inherent difficulty in classifying slightly different classes, many contributions in this area often aim at detecting informative regions in the input images that aid in classifying them~\citep{DBLP:conf/cvpr/DuanPCG12,yang2018learning,Chen2019ThisLL,hu2019better,huang2020interpretable,zhuang2020learning}. 
In our work, we would like to recover semantic attributes that are not necessarily associated with specific image regions, e.g., mammal/reptile, omnivore/carnivore. 

{{\bf Disentanglement}\quad A disentangled representation~\cite{10.1109/TPAMI.2013.50,Peters2017,10.1162/neco.1992.4.6.863,lake_ullman_tenenbaum_gershman_2017,47658} is a representation which contains multiple independent parts. Various methods, such as~\cite{marx2019disentangling,10.5555/3327345.3327490,NEURIPS2020_00a03ec6,brakel2017learning} have been able to effectively learn disentangled representations from data. This setting is different than ours in two ways. First, the attributes that we would like to recover are not independent. Second, in disentanglement, we are typically interested in recovering any set of features that represent the data in a disentangled manner. In our case, the success is measured vs. a specific set of attributes.}

{\bf Attributes Recovery}\quad Attributes recovery has been an active research direction in computer vision. Attributes are typically referred as human-interpretable features that can describe the input image or the class it belongs to. In~\cite{10.1007/978-3-030-11015-4_1} they cast the problem of attributes recovery as a supervised learning task. In~\cite{Yu_2013_CVPR,5206594,5206772,Sattar_2017_ICCV} they focus on category-level attributes, in which one would like to learn a set of attributes that describe the classes of the different images in the dataset. In several cases and especially in fine-grained classification, predicting the class of an image is a more nuanced process, where the values of attributes may vary between images in the same category. In our paper, we focus on recovering instance-level attributes. 

{\bf Explainability of Decision Trees}\quad
Decision trees are known to be naturally explainable models, as long as the decision rules are easy to interpret. However, as far as we know, it was not shown before that learning trees, while optimizing the features leads to the emergence of attributes of a large set of relevant attributes. 

In the context of recommendation systems interpretability, meta-trees were used for providing per-user decision rules~\cite{10.1145/3375627.3375876}. This method relies on a pre-existing set of features rather than extracting them from the input. Another line of work~\cite{frosst2017distilling}, which also employs preexisting features, distills the information in a deep network into a soft decision tree~\cite{irsoy2012soft}. Unlike our work, the neural network is not optimized to produce suitable features. 

More relevant to our work is the Adaptive Neural Trees method~\cite{pmlr-v97-tanno19a}, which also combines decision trees and neural networks. In this method, trees of dynamic architectures that include network layers are grown so that the underlying network features gradually evolve. It is shown that the learned tree divides the classes along meaningful axes but the emergence of semantic attributes was not shown.


{\bf Quantization\quad} Quantization is the conversion of the floating-point data to low-bit discrete representation.  The most common approaches for training a quantized neural network employ two sets of weights~\cite{hubara2016quantized,zhou2016dorefa}. The forward pass is performed with quantized weights, and updates are performed on full precision ones, i.e., approximating gradients with the straight-through estimator (STE) \cite{bengio2013estimating}. In this work, we make use of quantizations in order to learn discrete, interpretable representations of the data within a neural network.


\begin{figure}[t]
\centering
\includegraphics[width=1\textwidth]{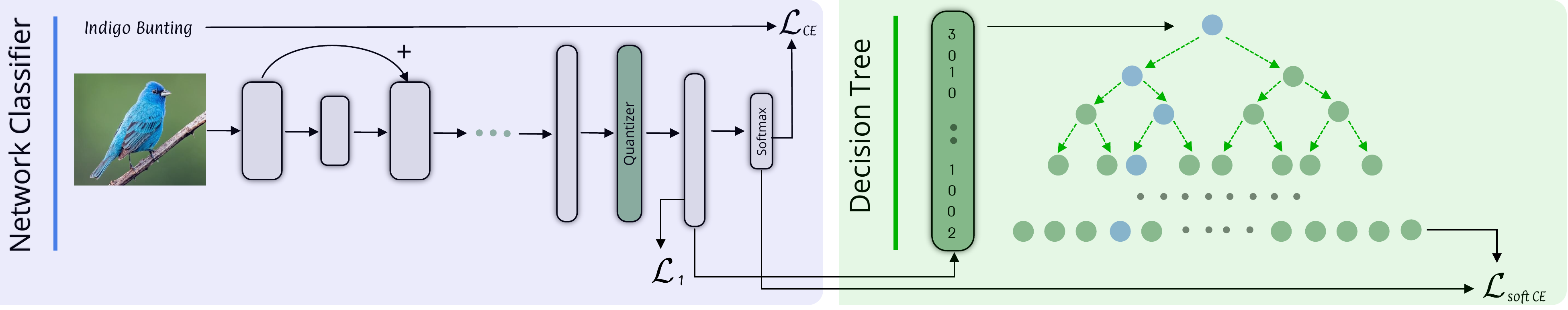}
\caption{{\bf An illustration of our method.} The model has three main components: {\bf(a)} a quantized representation function $F_q$, {\bf(b)} a classifier neural network $G$ and {\bf(c)} a multivariate decision tree $T$. We have three losses: a $L_1$ regularization over the masked quantized features, a cross entropy loss between the network and the ground-truth labels and a soft cross entropy loss between the outputs of the tree and the network.  }
\label{figures:model}
\end{figure}







\section{Problem Setup}\label{sec:setup}

In this section, we formulate a new learning setting that we call {\em Weakly Unsupervised Recovery of Semantic Attributes}. In this setting, there is an unknown target function $y:\mathcal{X} \to \mathcal{Y}$ along with an attributes function $f:\mathcal{X} \to \mathcal{U}^{d}$ that we would like to learn. Here, $\mathcal{X} \subset \mathbb{R}^n$ is a set of instances and $\mathcal{Y}$ is a set of labels. The function $f$ is assumed to be $d_0$-sparse, i.e., $\forall x \in \mathcal{X}:~\|f(x)\|_0 \leq d_0$. The values of $f$ are taken from a latent space $\mathcal{U}^d$, where $\mathcal{U} = \mathbb{R}$, $[-1,1]$ or $\{\pm 1\}$, depending on the task.

For example, $\mathcal{X}$ can be a set of images of $k$ animal species and $\mathcal{Y} = \{\textnormal{e}_i\}^{k}_{i=1}$ is the set of labels, where $\textnormal{e}_i \in \mathbb{R}^k$ is the $i$'th elementary vector. In this case, $y$ is a classifier that takes an animal image $x$ and returns the species of the animal. The function $f$ corresponds to a set of $d$ binary attributes $\{f_i\}^{d}_{i=1}$ (e.g., carnivore/herbivore). Each attribute $f_i$ takes an image $x$ and returns whether the animal illustrated in $x$ satisfies $i^{\text{th}}$ attribute.


Similar to the standard learning setting, the set $\mathcal{X}$ is endowed with a distribution $D$, from which samples $x$ are being taken. The learning algorithm is provided with a set $\mathcal{S} = \{(x_i,y(x_i))\}^{m}_{i=1}$ of $m$ i.i.d.\ samples. In the empirical risk minimization setting \cite{vapnik1992erm}, the learning algorithm is supposed to fit a hypothesis $H:\mathcal{X} \to \mathcal{Y}$ from a given {\em hypothesis class} $\mathcal{H}$ that would minimize the {\em expected risk}:
\begin{equation}\label{eq:expected}
\mathcal{L}_D[H,y] := \mathbb{E}_{x \sim D} [\ell(H(x),y(x))],
\end{equation}
where $\ell:\mathcal{Y}^2 \to [0,\infty)$ is some loss function, for example, the $\ell_2$ loss function $\ell_2(a,b) = \|a-b\|^2_2$ for regression or the cross-entropy loss for classification. Since the algorithm is not provided with a full access to $D$, the algorithm minimizes the empirical version of it:
\begin{equation}
\mathcal{L}_{\mathcal{S}}[H,y] := \frac{1}{m} \sum_{(x,y(x)) \in \mathcal{S}}\ell(H(x),y(x)),
\end{equation}
or by minimizing $\mathcal{L}_{\mathcal{S}}[H,y]$ along with additional regularization terms.

In contrast to the standard learning setting, in our framework we would like to learn a model $H = G \circ q \circ F$ that minimizes the expected risk, but also recovers the semantic attributes $f(x)$ (without any access to these attributes). Here, $F$ and $G$ are a trainable representation function and classifier and $q$ is a pre-defined discretization operator.

Put differently, the labels are given as proxy labels, while the actual goal is to learn a discrete-valued representation $q(F(x))$ of the data that maximizes some measure of feature fidelity $d_D(F) = d_{D}(f; b \circ q\circ F)$ that depends on the distribution $D$ and a binarization function $b$ that translates a vector into a binary vector (see Sec.~\ref{sec:method} for details). 

\subsection{Measures of Feature Fidelity}
\label{sec:metrics}

The goal of training in the proposed setting is to learn a representation of the data $q(F(x))$ that maximizes the fidelity of the extracted features with respect to an unseen set of ground-truth binary attributes $f(x)$. 

In this section, we define a generic family of functions $d_{D}(f;g)$ for measuring the fidelity of a multi-variate function $g:\mathcal{X} \to \mathcal{U}^{d}$ with respect to a multi-variate function $f:\mathcal{X} \to \mathcal{U}^{n}$  over a distribution $D$ of samples. 

Let $r(q_1,q_2;D)$ be a measure of accuracy between two univariate functions $q_1,q_2:\mathcal{X} \to \mathcal{U}$ over a distribution $D$. In this paper, since we consider imbalanced attributes, $r$ is the F1 score. 
Finally, we extend $r$ to be annotation invariant by using $\hat{r}(q_1,q_2;D) := \max \{r(q_1,q_2;D), r(q_1,1-q_2;D)\}$ as a measure of accuracy which is invariant to whether positive samples are denoted by $1$ or $0$. 


Let $g:\mathcal{X} \to \{0,1\}^d$ and $f:\mathcal{X} \to \{0,1\}^n$ be two multivariate binary functions. We denote:
\begin{equation}\label{eq:metric}
\begin{aligned}
d_{D}(f \| g) := \max_{\pi: [n] \to [d]} \frac{1}{n} \sum^{n}_{i=1} \hat{r}(f_i,g_{\pi(i)};D) = \frac{1}{n} \sum^{n}_{i=1} \max_{j \in [d]} \hat{r}(f_i,g_{j};D)
\end{aligned}
\end{equation}
This quantity measures the average similarity between each feature $f_i$ in $f$ to some feature $g_{j}$ in $g$. 

{The fidelity of $g$ with respect to $f$ is the harmonic mean $d_D(f;g) := 2\frac{d_D(f\|g)\cdot d_D(g\| f)}{d_D(f\|g) + d_D(g\| f)}$, which is a symmetric measure of similarity. Informally, $d_D(f\|g)$ measures the extent at which the set of attributes in $f$ can be treated as a subset of the set of attributes in $g$ and $d_D(f;g)$ as the extent at which the sets of attributes in $f$ and $g$ are equivalent. We use the harmonic mean since it penalizes low values in contrast to the arithmetic mean.}

When measuring the fidelity over a finite set of test samples $\mathcal{S}_T$, the proposed $d_{D}(f ; g)$ considers the discrete uniform distribution $D = U[\mathcal{S}_T]$. 

Finally, in some cases, it is more relevant to measure the distance between $f$ and $g$, instead of their similarity. We note that our method can be readily extended to this case by taking $r$ to be a distance function and replacing the maximization in Eq.~\eqref{eq:metric} and the definition of $\hat{r}$ by minimization. For example, for real-valued functions (i.e., $\mathcal{U} \subset \mathbb{R}$), it is reasonable to use the distance between the two functions, $r(q_1,q_2;D) = \mathbb{E}_{x\sim D}[\vert q_1(x) - q_2(x)\vert]$.

\section{Intersection Regularization}

A key component of our method in Sec.~\ref{sec:method} is the proposed notion of {\em Intersection Regularization}. In this section, we introduce and theoretically study the optimization dynamics of the intersection regularization.

Suppose we have two hypothesis classes $\mathcal{G} = \{G_{\theta} \mid \theta \in \Theta\}$ and $\mathcal{T} = \{T_{\omega}\mid \omega \in \Omega\}$, where $\Theta$ and $\Omega$ are two sets of parameters. Intersection regularization involves solving the following problem:
\begin{equation}
\begin{aligned}
&\min_{\theta \in \Theta} \min_{\omega \in \Omega} \mathcal{Q}(\theta,\omega) \\
\textnormal{where: }&\mathcal{Q}(\theta,\omega) :=  \mathcal{L}_{\mathcal{S}}[G_{\theta},y] + \mathcal{L}_{\mathcal{S}}[G_{\theta},T_{\omega}]
\end{aligned}
\end{equation}
In this problem, we are interested in learning a hypothesis $G_{\theta} \in \mathcal{G}$ that is closest to the target function $y$ among all members of $\mathcal{G}$ that can be approximated by a hypothesis $T_{\omega} \in \mathcal{T}$. We can think of $\mathcal{T}$ as a prior knowledge we have on the target function $y$. Therefore, in some sense, the term $\mathcal{L}_{\mathcal{S}}[G_{\theta},T_{\omega}]$ acts as a regularization term that restricts $G_{\theta}$ to be close to the class $\mathcal{T}$. 

In Sec.~\ref{sec:method}, we use intersection regularization to train a neural network $G$ over a quantized representation that mimics a decision tree $T$ and minimizes the classification error. In this case, the class $\mathcal{T}$ consists of decision trees of a limited depth ($\omega$ is a vector that allocates an encoding of the tree, including its structure and {decision rules}) and $\mathcal{G}$ is a class of neural networks of a fixed architecture ($\theta$ is a vector of the weights and biases of a given network). The underlying quantized representation thus obtained is suitable for classification by both a decision tree and a neural network. 

The following analysis focuses on two main properties: finding a local minima of $\mathcal{Q}$ and arriving at an equilibrium point of $\mathcal{Q}$. An equilibrium point of $\mathcal{Q}$ is a pair $(\theta,\omega)$, such that, $\mathcal{Q}(\hat{\theta},\hat{\omega}) 
= \min_{\theta} \mathcal{Q}(\theta,\hat{\omega}) = \min_{\omega} \mathcal{Q}(\hat{\theta},\omega)$. The proofs are given in the supplementary material and are based on Thm.~2.1.14 by \citet{10.5555/2670022} and the analysis by \citet{song2017block}.





The following proposition shows that under certain conditions, it is possible to converge to an equilibrium of $\mathcal{Q}(\theta,\omega)$ when iteratively optimizing $\theta$ and $\omega$. To show that, we assume that $\mathcal{Q}$ is a convex function with respect to $\theta$ for any fixed value of $\omega$. {This is true, for example, when $G_{\theta}$ is the linearization of a wide neural network~\citep{NEURIPS2019_0d1a9651}, which also serve as universal approximators~\citep{Ji2020Neural}. It has also been proven that the optimization dynamics of wide neural networks match the dynamics of their linearized version~\citep{NEURIPS2019_0d1a9651}.} 
In addition, we assume that one is able to compute a global minimizer $\omega$ of $\mathcal{L}_{\mathcal{S}}[G_{\theta},T_{\omega}]$ for any $\theta$. This is typically impossible, however, it is reasonable to assume that one is able to approximately minimize $\mathcal{L}_{\mathcal{S}}[G_{\theta},T_{\omega}]$ with respect to $\omega$ if it is being optimized by a descent optimizer. Throughout the analysis, we assume that $\cup_{\omega \in \Omega} \arg\min_{\theta} \mathcal{Q}(\theta,\omega)$ is well-defined and bounded and that $\lim_{\theta:~\|\theta\| \to \infty}\mathcal{L}_{\mathcal{S}}[G_{\theta},y] = \infty$.

\begin{restatable}{prop}{lemOne}\label{lem:1}
Assume that $\mathcal{Q}(\theta,\omega)$ is convex and $\beta$-smooth w.r.t $\theta$ for any fixed value of $\omega$. Let $\theta_1$ be some initialization and $\omega_1  \in \arg\min_{\omega} \mathcal{Q}(\theta_1,\omega)$. We define $\theta_t$ to be the weights produced after $t$ iterations of applying Gradient Descent on $\mathcal{Q}(\theta,\omega_{t-1})$ over $\theta$ with learning rate $\mu < \beta^{-1}$ and $\omega_t = \arg\min_{\omega} \mathcal{Q}(\theta_{t-1},\omega)$. Then,  we have:
\begin{equation*}
\begin{aligned}
\lim_{t \to \infty} \mathcal{Q}(\theta_{t},\omega_{t}) 
= \lim_{t \to \infty} \min_{\theta} \mathcal{Q}(\theta,\omega_{t}) = \lim_{t \to \infty} \min_{\omega} \mathcal{Q}(\theta_{t},\omega)
\end{aligned}
\end{equation*}
\end{restatable}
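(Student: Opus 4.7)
The plan is to treat the scheme as an alternating block-minimization in which the $\omega$-block is solved exactly and the $\theta$-block is approximately solved by gradient descent on a convex, $\beta$-smooth objective, and then to argue that all three quantities in the conclusion collapse to the common limit of $\mathcal{Q}(\theta_t,\omega_t)$. First, I would show monotonicity: for the $\omega$-step, exact minimization gives $\mathcal{Q}(\theta_t,\omega_{t+1}) = \min_{\omega}\mathcal{Q}(\theta_t,\omega) \le \mathcal{Q}(\theta_t,\omega_t)$, while for the $\theta$-step, the descent lemma for $\beta$-smooth functions with $\mu<\beta^{-1}$ yields, per GD iteration on the fixed objective $\mathcal{Q}(\cdot,\omega_{t-1})$,
\[
\mathcal{Q}(\theta_{t},\omega_{t-1}) \le \mathcal{Q}(\theta_{t-1},\omega_{t-1}) - \tfrac{\mu}{2}\|\nabla_\theta \mathcal{Q}(\theta_{t-1},\omega_{t-1})\|^2.
\]
Since $\mathcal{Q}\ge 0$, the monotone sequence has a limit $\mathcal{Q}^\star := \lim_t \mathcal{Q}(\theta_t,\omega_t)$.

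Second, I would convert this into an optimality statement in $\theta$. Applying the standard $O(1/t)$ rate for gradient descent on convex $\beta$-smooth functions started at $\theta_1$ with step size $\mu<\beta^{-1}$,
\[
\mathcal{Q}(\theta_t,\omega_{t-1}) - \min_\theta \mathcal{Q}(\theta,\omega_{t-1}) \le \frac{\|\theta_1 - \theta^\star_{t-1}\|^2}{2\mu t},
\]
where $\theta^\star_{t-1} \in \arg\min_\theta \mathcal{Q}(\theta,\omega_{t-1})$. The coercivity hypothesis $\mathcal{L}_{\mathcal{S}}[G_\theta,y]\to\infty$ as $\|\theta\|\to\infty$ confines the iterates to a bounded sublevel set, and the assumption that $\bigcup_\omega \arg\min_\theta \mathcal{Q}(\theta,\omega)$ is bounded controls $\|\theta^\star_{t-1}\|$ uniformly, so the right-hand side vanishes. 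Sandwiching
\[
\mathcal{Q}(\theta_t,\omega_t) \le \mathcal{Q}(\theta_t,\omega_{t-1}) \le \min_\theta \mathcal{Q}(\theta,\omega_{t-1}) + o(1) \le \mathcal{Q}(\theta_{t-1},\omega_{t-1}) + o(1),
\]
and passing to the limit forces $\min_\theta \mathcal{Q}(\theta,\omega_t) \to \mathcal{Q}^\star$. For the $\omega$-partial minimum the argument is immediate: by the exact update, $\min_\omega \mathcal{Q}(\theta_t,\omega) = \mathcal{Q}(\theta_t,\omega_{t+1})$, and monotone convergence of the full sequence gives the desired limit $\mathcal{Q}^\star$.

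The main obstacle is the second step: transferring the per-iterate descent into a quantitative gap to the fixed-$\omega$ optimum. Without uniform boundedness of both the iterates $\theta_t$ and the minimizers $\theta^\star_t$, the $O(1/t)$ rate could degrade as $\|\theta^\star_{t-1}\|$ grows with $t$, and the sandwich above would fail. Using the two side conditions (coercivity and bounded union of argmins) tightly, so that the numerator in the GD convergence estimate stays uniformly bounded across $t$, is the delicate part; once this is in hand, nonnegativity of $\mathcal{Q}$ and the exactness of the $\omega$-update close the argument via monotone convergence.
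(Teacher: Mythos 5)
Your overall architecture matches the paper's: establish monotone decrease of $\mathcal{Q}(\theta_t,\omega_t)$ via the exact $\omega$-step and the descent lemma, show the $\theta$-optimality gap $\mathcal{Q}(\theta_t,\omega_{t-1})-\min_\theta\mathcal{Q}(\theta,\omega_{t-1})$ vanishes, and get the $\omega$-equality for free from exact minimization. However, your second step contains a genuine gap. You invoke the cumulative $O(1/t)$ rate
$\mathcal{Q}(\theta_t,\omega_{t-1})-\min_\theta\mathcal{Q}(\theta,\omega_{t-1})\le \|\theta_1-\theta^\star_{t-1}\|^2/(2\mu t)$,
but that bound is proved by telescoping $\|\theta_{k+1}-\theta^\star\|^2\le\|\theta_k-\theta^\star\|^2-2\mu\bigl(f(\theta_k)-f^\star\bigr)+\mu^2\|\nabla f(\theta_k)\|^2$ over $t$ steps \emph{all taken on the same objective $f$ with the same minimizer $\theta^\star$}. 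In the scheme analyzed here (and in your own step 1, where you write the one-step descent inequality relating $\theta_t$ to $\theta_{t-1}$), only one gradient step is taken per outer iteration and the objective $\mathcal{Q}(\cdot,\omega_{t-1})$ changes at every step, so both the function and its minimizer move between steps and the telescoping collapses. The $O(1/t)$ estimate therefore does not apply, and your sandwich argument, which rests entirely on it, is unsupported. (If you instead read the algorithm as restarting GD from $\theta_1$ and running $t$ fresh iterations on the frozen objective $\mathcal{Q}(\cdot,\omega_{t-1})$, the rate would hold but your monotonicity chain in step 1 would no longer follow; the two halves of your argument presuppose incompatible readings of the update.)

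The paper closes this gap differently and you should adopt its mechanism: since $\mathcal{Q}(\theta_t,\omega_t)$ is monotone, nonnegative, hence convergent, the consecutive differences $\mathcal{Q}(\theta_{t-1},\omega_{t-1})-\mathcal{Q}(\theta_t,\omega_t)$ tend to zero. The descent lemma lower-bounds this difference by $\eta\|\nabla_\theta\mathcal{Q}(\theta_t,\omega_{t-1})\|_2^2$ with $\eta=\mu(1-0.5\beta\mu)>0$, and convexity gives
$\mathcal{Q}(\theta_t,\omega_{t-1})-\min_\theta\mathcal{Q}(\theta,\omega_{t-1})\le\|\nabla_\theta\mathcal{Q}(\theta_t,\omega_{t-1})\|_2\,\|\theta_t-\theta^\star_t\|_2$.
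Combining the two yields a lower bound on the per-step decrease proportional to the \emph{squared} optimality gap divided by $\|\theta_t-\theta^\star_t\|_2^2$; the coercivity of $\mathcal{L}_{\mathcal{S}}[G_\theta,y]$ and the boundedness of $\cup_\omega\arg\min_\theta\mathcal{Q}(\theta,\omega)$ (which you correctly identified as the relevant side conditions) keep that denominator bounded, so the squeeze forces the gap to zero without any appeal to a multi-step convergence rate. Your treatment of the third equality via the exact $\omega$-update is fine.
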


The following proposition shows that if we apply Block Coordinate Gradient Descent (BCGD) to optimize $\theta$ and $\omega$ for minimizing $\mathcal{Q}(\theta,\omega)$ (starting at $(\theta_1,\omega_1)$), then, they converge to a local minima that is also an equilibrium point. The BCGD iteratively updates: $\theta_{t+1} = \theta_t - \mu \nabla \mathcal{Q}(\theta_{t},\omega_{t})$ and $\omega_{t+1} = \omega_t - \mu \nabla \mathcal{Q}(\theta_{t+1},\omega_{t})$. Throughout the analysis we assume that the sets $\cup_{\omega \in \Omega} \arg\min_{\theta} \mathcal{Q}(\theta,\omega)$ and $\cup_{\theta \in \Theta}\arg\min_{\omega} \mathcal{Q}(\theta,\omega)$ are well-defined and bounded. 

\begin{restatable}{prop}{lemTwo}\label{lem:2}
Assume that $\mathcal{Q}(\theta,\omega)$ is a twice-continuously differentiable, element-wise convex (i.e., convex w.r.t $\theta$ for any fixed value of $\omega$ and vice versa), Lipschitz continuous and $\beta$-smooth function, whose saddle points are strict. Let $\theta_t$, $\omega_t$ be the weights produced after $t$ iterations of applying BCGD on $\mathcal{Q}(\theta,\omega)$ with learning rate $\mu < \beta^{-1}$. $(\theta_t,\omega_t)$ then converges to a local minima $(\hat{\theta},\hat{\omega})$ of $\mathcal{Q}$ that is also an equilibrium point.
\end{restatable}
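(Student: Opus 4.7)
I would argue in three stages. First, $\beta$-smoothness and $\mu<\beta^{-1}$ yield a blockwise descent lemma, which forces $\|\nabla\mathcal{Q}(\theta_t,\omega_t)\|\to 0$ while element-wise convexity combined with boundedness of $\cup_{\omega}\arg\min_\theta\mathcal{Q}(\theta,\omega)$ confines the iterates to a compact set, giving subsequential convergence to a joint stationary point $(\hat\theta,\hat\omega)$. Second, the strict-saddle assumption together with the stable-manifold analysis for block-coordinate maps of \citet{song2017block} rules out saddle-point limits, so $(\hat\theta,\hat\omega)$ is, generically in the initialization, a local minimum. Third, element-wise convexity upgrades this local minimum to an equilibrium point.

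For the first stage, applying the descent lemma to the $\theta$-block at fixed $\omega_t$ gives
\begin{equation*}
\mathcal{Q}(\theta_{t+1},\omega_t) \leq \mathcal{Q}(\theta_t,\omega_t) - \mu\!\left(1 - \tfrac{\beta\mu}{2}\right)\|\nabla_\theta\mathcal{Q}(\theta_t,\omega_t)\|^2,
\end{equation*}
and an analogous bound for the $\omega$-block. Since $\mathcal{L}_{\mathcal{S}}[\cdot,\cdot]\geq 0$, the resulting non-increasing sequence $\mathcal{Q}(\theta_t,\omega_t)$ is bounded below, so telescoping the two inequalities yields $\sum_t\bigl(\|\nabla_\theta\mathcal{Q}(\theta_t,\omega_t)\|^2+\|\nabla_\omega\mathcal{Q}(\theta_{t+1},\omega_t)\|^2\bigr)<\infty$, whence both partial gradients vanish along the sequence. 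Boundedness of the iterates follows from the classical fact that for a convex, $\beta$-smooth function a gradient step with $\mu<\beta^{-1}$ is non-expansive toward the minimizer, combined with the uniform boundedness of $\arg\min_\theta\mathcal{Q}(\theta,\omega)$ and $\arg\min_\omega\mathcal{Q}(\theta,\omega)$ over the other block. By continuity of $\nabla\mathcal{Q}$, any subsequential limit $(\hat\theta,\hat\omega)$ is then a joint stationary point.

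For the second and third stages, view the BCGD update as a smooth map $\Phi:(\theta,\omega)\mapsto(\theta_+,\omega_+)$, where $\theta_+=\theta-\mu\nabla_\theta\mathcal{Q}(\theta,\omega)$ and $\omega_+=\omega-\mu\nabla_\omega\mathcal{Q}(\theta_+,\omega)$; twice-continuous differentiability and $\mu<\beta^{-1}$ make $\Phi$ a local $C^1$ diffeomorphism. At a strict saddle the joint Hessian of $\mathcal{Q}$ has a strictly negative eigenvalue, and the blockwise analysis in \citet{song2017block} shows that the corresponding Jacobian $D\Phi$ then has an eigenvalue of absolute value exceeding $1$ along some direction; the stable manifold theorem therefore implies that the set of initializations whose BCGD trajectory converges to a strict saddle has Lebesgue measure zero. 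Hence $(\hat\theta,\hat\omega)$ is generically a local minimum. At such a point $\nabla_\theta\mathcal{Q}(\hat\theta,\hat\omega)=0$ combined with convexity of $\mathcal{Q}(\cdot,\hat\omega)$ gives $\hat\theta\in\arg\min_\theta\mathcal{Q}(\theta,\hat\omega)$, and symmetrically $\hat\omega\in\arg\min_\omega\mathcal{Q}(\hat\theta,\omega)$, which is exactly the equilibrium condition.

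\textbf{Main obstacle.} The routine descent-lemma book-keeping is not where the difficulty lies; the real work is the saddle-avoidance step, because BCGD is not a gradient flow on the joint parameter space and the two blocks are updated sequentially, so the clean Jacobian-spectrum argument available for vanilla gradient descent does not transfer verbatim. Verifying that the composite map $\Phi$ inherits an unstable direction at every strict saddle requires carefully propagating the unstable eigenvector of the Hessian through the sequential block update — essentially the technical content of the relevant theorem in \citet{song2017block}, which I would invoke as a black box rather than re-derive.
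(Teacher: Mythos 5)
Your proposal is correct and follows the paper's skeleton for the hard part: like the paper, you delegate convergence and saddle avoidance to the block-coordinate analysis of \citet{song2017block} (the paper cites their Prop.~3.4 and Cor.~3.1 directly to get convergence to a local minimum, so your stable-manifold discussion is a faithful unpacking of the same black box). Where you genuinely diverge is the equilibrium step. The paper re-runs the machinery of its first proposition: it combines the descent lemma with blockwise convexity to get $\mathcal{Q}(\theta_{t+1},\omega_t)\leq\mathcal{Q}(\theta_t,\omega_t)-\frac{\eta}{\|\theta_t-\theta_t^*\|^2}\bigl(\mathcal{Q}(\theta_t,\omega_t)-\min_\theta\mathcal{Q}(\theta,\omega_t)\bigr)^2$ and its $\omega$-analogue, telescopes, invokes the sandwich theorem and the boundedness of $\cup_\omega\arg\min_\theta\mathcal{Q}(\theta,\omega)$ and $\cup_\theta\arg\min_\omega\mathcal{Q}(\theta,\omega)$, and finally uses Lipschitz continuity to pass to the limit. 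You instead observe that once the limit is a (block-wise) stationary point, convexity of $\mathcal{Q}(\cdot,\hat\omega)$ and $\mathcal{Q}(\hat\theta,\cdot)$ immediately makes $\hat\theta$ and $\hat\omega$ global minimizers in their respective blocks. Your route is shorter, more elementary, and does not need the boundedness of the argmin sets at this stage; the paper's route has the mild advantage of quantifying the rate at which the optimality gaps close, but for the statement as written your argument suffices. One cosmetic caveat: your conclusion is "generic in the initialization" (measure-zero exceptional set), which is the honest form of what \citet{song2017block} delivers; the proposition and the paper's proof state unconditional convergence, so you are if anything more careful than the source.
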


\section{Method}\label{sec:method}

In this section, we discuss propose our method for dealing with the problem introduced in Sec.~\ref{sec:setup}.

{\bf Model\quad} Informally, our algorithm aims at learning a sparse discrete representation $F_q(x) := q(F(x))$ of the data, that is suitable for classification by a decision tree $T$ of small depth. {Intuitively, this leads to a representation that supports classification by relatively simple decision rules.} For this purpose, we consider a model of the following form: $H_{tree} = T \circ F_q$, where $F:\mathcal{X} \to \mathbb{R}^{d}$ is a trainable neural network and $T:\mathbb{R}^{d} \to \mathbb{R}^k$ is a decision tree from a class $\mathcal{T}$ of multivariate regression decision trees of maximal depth $d_{\max}$. The function $G$ (and $T$) is translated into a classifier by taking $\arg\max_{i \in [k]} G(F_q(x))$ (and $\arg\max_{i \in [k]} T(F_q(x))$).


\begin{algorithm}[t]
 \caption{Intersection Regularization based Sparse Attributes Recovery}
 \label{alg:method}
 \begin{algorithmic}[1]
 \REQUIRE $\mathcal{S} = \{(x_i,y(x_i))\}^{m}_{i=1}$ - dataset; $\lambda_1,\lambda_2,\lambda_3$ - non-negative coefficients; $I$ - number of epochs; $s$ - batch size; $A$ - Tree training algorithm (and splitting criteria);
 \STATE Initialize $F,G$ and $T=\textnormal{None}$;
 \STATE Partition $\mathcal{S}$ into batches $Batches(\mathcal{S})$ of size $s$;
 \FOR{$i = 1,\dots,I$}
 \STATE $\bar{\lambda}_2 = \mathds{1}[i>1] \cdot \lambda_2$;
 \STATE $\mathcal{S}'=\emptyset$;
 \FOR{$B \in Batches(\mathcal{S})$}
 \STATE Update $G$ using GD to minimize $\lambda_1 \mathcal{L}_{B}[G \circ F_q,y] + \bar{\lambda}_2 \mathcal{L}_{B}[G \circ F_q, T \circ F_q]$;
 \STATE Update $F$ using GD to minimize $\lambda_1 \mathcal{L}_{B}[G \circ F_q,y] + \bar{\lambda}_2 \mathcal{L}_{B}[G \circ F_q, T \circ F_q] + \lambda_3\mathcal{R}_{B}[F]$; 
 \STATE Extend $\mathcal{S}' = \mathcal{S}' \cup \{(F_q(x),G(F_q(x))) \mid x \in B\}$;
 \ENDFOR
 \STATE Initialize decision tree $T$;
 \STATE Train $T$ over $\mathcal{S}'$ using $A$;
 \ENDFOR
 \STATE {\bf return} $F,G,T$;
  \end{algorithmic}
\end{algorithm}

In order to learn a neural network $G$ that minimizes the classification error and also approximates a decision tree, we apply intersection regularization between a class of neural networks $\mathcal{G}$ and the class of decision trees $\mathcal{T}$. Our objective function is decomposed into several loss functions. For each loss, we specify in brackets $``[~\cdot~]''$ the components that are responsible for minimizing the specified loss functions. For a full description of our method, see Alg.~\ref{alg:method}. 

For each epoch (line 3), we iteratively update the network to minimize its objective function using GD and train the decision tree from scratch. To optimize the neural network, we use two loss functions (lines 7-8). The first one is the cross-entropy loss of $H_{net}$ with respect to the ground-truth labels, and the second one is the soft cross-entropy loss of $H_{net}$ with respect to the probabilities of $H_{tree}$,
\begin{align}
&\mathcal{L}_{B}[G \circ F_q,y] \textnormal{ and } \mathcal{L}_{B}[G \circ F_q, T \circ F_q],
\end{align}
where the loss function $\ell:\Delta_k \times \Delta_k \to [0,\infty)$ is the cross-entropy loss that is defined as follows: $\ell(u,v) := -\sum^{k}_{i=1} v_i \cdot \log(u_i)$, $k$ is the number of classes and $\Delta_k$ is the standard simplex. {The second loss is applied only from the second epoch onward (line 4).} 


To encourage sparse representations~\citep{tibshirani2996lasso,koh07l1}, we also apply $L_1$ regularization over the quantized representation of the data. 
\begin{equation}\label{eq:l_one}
\mathcal{R}_{B}[F] := \frac{1}{m} \sum^{s}_{i=1} \|F_q(x_i)\|_1
\end{equation}

{\bf Tree Optimization\quad} During each epoch, we accumulate a dataset $\mathcal{S}'$ of pairs $(F_q(x),G(F_q(x)))$ for all of the samples $x$ that have been incurred until the current iteration (line 9 in Alg.~\ref{alg:method}). By the end of the epoch, we train a multivariate regression decision tree $T$ from scratch over the dataset $\mathcal{S}'$ (see line 12 in Alg.~\ref{alg:method}). To train the decision tree, we used the CART algorithm~\cite{loh2011classification} with the information gain splitting criteria.

{\bf Quantization and Binarization\quad} The function $F:\mathcal{X} \to \mathbb{R}^n$ is a real-valued multivariate function. To obtain a discrete representation of the data, we discretize the outputs of $F$ using the uniform quantizer~\citep{sheppard1897calculation,vanhoucke2011cpu} $q$, see supplementary for a step-by-step listing of the algorithm. 
This quantization employs a finite number of equally-sized bins. Their size is calculated by dividing the input range into $2^r$ bins, where $r$ specifies the number of bits for encoding each bin.

Since the round function is non-differentiable, the gradients of the uniform quantization are usually approximated~\cite{bengio2013estimating,ramapuram2019improving,yang2019quantizationnet}. In this paper, we utilize the straight-through estimator (STE)~\citep{bengio2013estimating}, which assumes that the Jacobian of rounding is just identity, to estimate gradients of the discretization. 


{For applying the feature fidelity measure proposed in Sec.~\ref{sec:metrics}, we first cast the discrete vector $q(F(x))$ into a vector of binary features $b(q(F(x)))$ and compute $d_D(F) := d_D(f ; b \circ q \circ F)$, where $b$ is a binarization function. In this paper, we use the following binarization scheme. For a given vector $v = q(F(x))$ we compute $u = b(v) := (u^1\|u^2)$ as follows: for all $i \in [n]$ and $j \in [2^{r}]$, we have: $u^1_{i,j} = \mathds{1}[v_i=j]$ and $u^2_{i,j} = \mathds{1}[v_i\neq j]$, where $\mathds{1}$  is the indicator function. The dimension of $u$ is $2^{r+1} \cdot n$.}

\section{Experiments}\label{sec:experiments}

In this section, we evaluate our method on several datasets with attributes in comparison with various baselines. To evaluate the attributes recovery quality, we use the evaluation metric explained in Sec.~\ref{sec:metrics} based on the F1 score.

{\bf Implementation Details\quad}  The architecture of the feature extractor $F$ is taken from~\citep{du2020fine} with the published hyperparameters and is based on ResNet-50. The classifier $G$ is a two-layered fully connected neural network. We initialize the ResNet with pre-trained weights trained on ILSVRC2012~\citep{10.1007/s11263-015-0816-y}. {When training our method and the baselines we employed the following early stopping criteria: we train the model until the second epoch for which the accuracy rate drops over a validation set, and report the results on the last epoch before the drop.}

{Our method was run on 4 GeForce RTX 2080 Ti GPUs. On the aPscal and aYahoo datasets, each epoch takes about 2-3 minutes, on the AwA dataset each epoch takes 17-25 minutes, on CUB-200-2011 dataset each epoch takes 8-15 minutes. The range of time depends on the representation capacity (e.g., no. bits, dimension).}

Throughout the experiments, we used the following default hyperparameters, except in our ablation studies, where we varied the hyperparameters to evaluate their effect. The coefficients for the loss functions $\lambda_1 = 2$, $\lambda_2 = 1$, $\lambda_3 = 0.001$. 
{Optimization was carried out using SGD.}

{\bf Datasets\quad} Throughout our experiments, we used the following datasets. (i) The aYahoo dataset~\citep{farhadi2009describing}, consisting of images from $12$ classes (e.g., bag, goat, mug). Each image is labeled with $64$ binary attributes. The images are collected from Yahoo. (ii) aPascal dataset~\citep{farhadi2009describing} consisting of $20$ classes of images. The images are labeled with the same $64$ binary attributes as in the aYahoo dataset. (iii) Animals with attributes dataset (AwA2)~\citep{xian2018zero} consisting of images from $50$ animal classes, each class is labeled with $85$ numeric attribute values, (iv) CUB-200-2011~\citep{WahCUB_200_2011} consisting of $200$ bird species classes, each image is labeled with $312$ binary attribute values. {We used the standard train/evaluation splits of the datasets.}

{\bf Baseline methods\quad} We compare our method with various methods that capture a wide variety of approaches: (i) SDT~\citep{frosst2017distilling} and ANT~\citep{pmlr-v97-tanno19a}, train a neural network of a tree architecture with the intention of learning high-level concepts. (ii) WS-DAN~\citep{hu2019better} is a method that generates attention maps to represent the object’s discriminative parts in order to improve the classification. (iii) DFL-CNN and Nts~\citep{wang2018learning,yang2018learning} use a network that captures class-specific discriminative regions. Region Grouping~\citep{huang2020interpretable} is a similar method that also uses a regularization term enforcing the empirical distribution of part occurrence to align a U-shaped prior distribution. (iv) ProtoPNet~\citep{Chen2019ThisLL} computes similarity scores of informative patches in the image with learned prototype images. These similarity scores are then aggregated by an MLP classifier. All of the baselines, except ProtoPNet, are provided with labeled samples, without access to ground-truth semantic segmentation of any kind. In ProtoPNet they make use of cropping-based preprocessing that makes use of the bounding boxes provided with the CUB-200-2011 dataset. 

{To measure the baselines' and our method's performance on our task, we apply quantization on the penultimate layer of each model. For each method, we apply configurations with penultimate layer dimension $128,256$ or $512$ and apply uniform quantization with a $1,2,3$ or $4$ bits representation. We report, for each method, the results of the configuration that provides the highest accuracy rate on the validation set. Note that this selection is based on accuracy and not on the feature fidelity score.} The fidelity score is computed as follows: $d_D(F) := d_D(f ; b \circ q \circ F)$, where $F$ is the penultimate layer of the model and $q$, $b$ are the uniform quantization and binarization operators. 

\begin{table}[t]
\caption{{\bf Comparing the performance of various baselines with our method.} We report the classification accuracy rate (Acc) and the feature fidelity score ($d_D(F)$) for each method on each one of the datasets. {For ProtoPNet on CUB-200-2011, we report the results with (right) and without (left) using cropping-based augmentations.} As can be seen, our method outperforms the other methods in terms of recovering the semantic attributes across datasets.} 
\centering
\resizebox{\linewidth}{!}{
\begin{tabular}{l@{~}c@{~}c@{~~}c@{~~}c@{~~}c@{~}c@{~~}c@{~~}c@{~~}c@{~}c@{~~}c@{~~}c@{~~}c@{~}c@{~~}c@{~~}c@{~~}c@{~~}c@{~~}c@{~~}c@{}}
\toprule
& \multicolumn{4}{c}{CUB-200-2011} & \multicolumn{4}{c}{aYahoo} & 
\multicolumn{4}{c}{AwA2} &
\multicolumn{4}{c}{aPascal} 
\\
\cmidrule(lr){2-5}
\cmidrule(lr){6-9}
\cmidrule(lr){10-13}
\cmidrule(lr){14-17}
Method & Bits & Dim & Acc & $d_D(F)$ & Bits & Dim & Acc & $d_D(F)$ 
& Bits & Dim & Acc & $d_D(F)$ & Bits & Dim & Acc & $d_D(F)$\\
\midrule
SDT~\citep{frosst2017distilling} & 4&512&9.80\% & 33.01& 3& 512& 17.58\% &31.90 &2 &256 & 7.21\%&66.17 & 2&128&30.48\%&31.53  \\
ANT~\citep{pmlr-v97-tanno19a} & 2& 512&11.13\% &33.69 &4 & 512&35.66\% &39.35& 3& 256&14.17\%&69.27 &3&256&31.70\%&39.24  \\
PMG~\cite{zhang2021progressive} & 4&512 & 86.41\%&39.06 & 3&512 & 96.14\% & 43.68 &1&256 &96.14\%&71.70&2&128&\textbf{79.22\%}&45.82  \\
WS-DAN~\citep{hu2019better} & 4& 256&87.54\% &37.90 &2&256&96.82\%&56.90
&4 & 512& 96.22\% & 78.08&3 &512&79.03\%&57.44  \\
DFL-CNN~\citep{wang2018learning} &3 & 512& 85.88\%& 37.56&3 & 512& 88.16\%& 30.06& 4& 256&89.20\% & 65.01&4&256&72.82\%& 43.13 \\
Nts~\citep{yang2018learning} &2 & 128 & 85.42\% &40.66 & 3& 128 & 98.11\% & 55.27&  4&256 &96.12\%&79.40&3&128&76.10\%&53.42  \\
API-Net~\citep{zhuang2020learning} &3&512 & \textbf{88.21\%} &36.77& 3 &  512& \textbf{98.72\%} & 57.26  & 4 & 256 & \textbf{96.25\%} &79.85 &  3&256&79.15\%&57.58  \\
ProtoPNet~\citep{Chen2019ThisLL} &3 & 512 & 78.82\% & 35.31 &4 & 256 & 85.62\% & 49.67 & 2 & 512 & 89.11\% & 76.42 &  2 &512&73.52\%&49.66  \\
Region Group \citep{huang2020interpretable} &3 &512 & 86.10\% &40.45& 2 &  512& 97.70\%& 58.90 & 3 & 512 & 96.11\% &80.27&3&256&77.27\%&56.80  \\
Quantized network & 4 & 512&80.08\%& 40.22& 1 &512& 98.08\%&55.75&2&512&95.30\%&80.32&2&256&74.80\%&56.30 \\
Quantized net + DT & 4&512  & 39.04\%& 32.11& 1 & 512 &58.24\% & 35.77& 2&512&52.09\%&68.77&2&256&47.65\%&33.82 \\ 
Quantized net + L1 & 4 & 512 & 80.61\% & 41.82& 1 & 512&97.57\% & 59.20&  2 &512  &94.81\%&83.31&2&256&75.06\%&59.80   \\
Our method & 4 & 512 & 79.82\% & \textbf{42.58} & 1 & 512 &96.86\% & \textbf{61.46} & 2 & 512 & 94.48\% &\textbf{84.67}&2&256&75.09\%&\textbf{61.72}\\
\bottomrule
\end{tabular}
\label{tab:main_results}}
\end{table}

\paragraph{Quantitative Analysis}\label{sec:exp1} 
{In Tab.~\ref{tab:main_results} we report the results of our method and the various baselines on each one of the datasets. As can be seen, our method has a significantly higher feature fidelity score, compared to the baselines across all datasets.} {We note that ProtoPNet~\citep{Chen2019ThisLL} makes use of cropping-based preprocessing. In order to fairly compare its results with the rest of the methods, we omitted using the preprocessing on all datasets except for CUB-200-2011. As can be seen, our method achieves a fidelity score that is higher than ProtoPNet's on CUB-200-2011, even though this kind of supervision helps ProtoPNet achieve the most competitive feature fidelity score among the baselines. On the other datasets, where no such crops were available, and ProtoPNet is trained on the entire frame, the feature fidelity score of ProtoPNet is significantly worse than that of our method.} 


\paragraph{Ablation Study} 
We conducted several ablation studies to validate the soundness of our method. 
Throughout the ablations, we compared our method with three of its variations: (i) a quantized network trained to minimize the cross-entropy loss, (ii) a quantized network trained to minimize the cross-entropy loss and the $L_1$ regularization loss { and (iii) a quantized network with a differentiable soft decision tree (SDT) on top of it, trained to minimize the cross-entropy loss. {In all three cases, the neural network is initialized with weights pre-trained on ILSVRC2012~\citep{10.1007/s11263-015-0816-y}.} We used the online learning decision trees of~\citet{10.1145/347090.347107}. 
}  The quantized network uses the same architecture as our model and is trained with the same hyperparameters. 
As can be seen in Tab.~\ref{tab:main_results}, our method significantly improves the recovery of the semantic attributes, at the small expense of a slight decrease in the classification accuracy, which is of second priority for our task.

To validate that the obtain performance gap in the fidelity score $d_{D}(F)$ is consistent across multiple configurations, we report in Fig~\ref{fig:barplots} the results of the baseline method (i) and our complete method. As can be seen, our method does not harm the models' accuracy, while it generally improves the feature fidelity score across the various configurations of feature dimensions and the number of bits used in the binarization process.

{A qualitative analysis studying which attributes emerge within the learned representations by applying our full method is presented in the supplementary material.}

\begin{figure*}[t]
\centering
\resizebox{1.\textwidth}{!}{
\begin{tabular}{@{}c@{}c@{}c@{}c@{}}
\includegraphics[width=10cm]{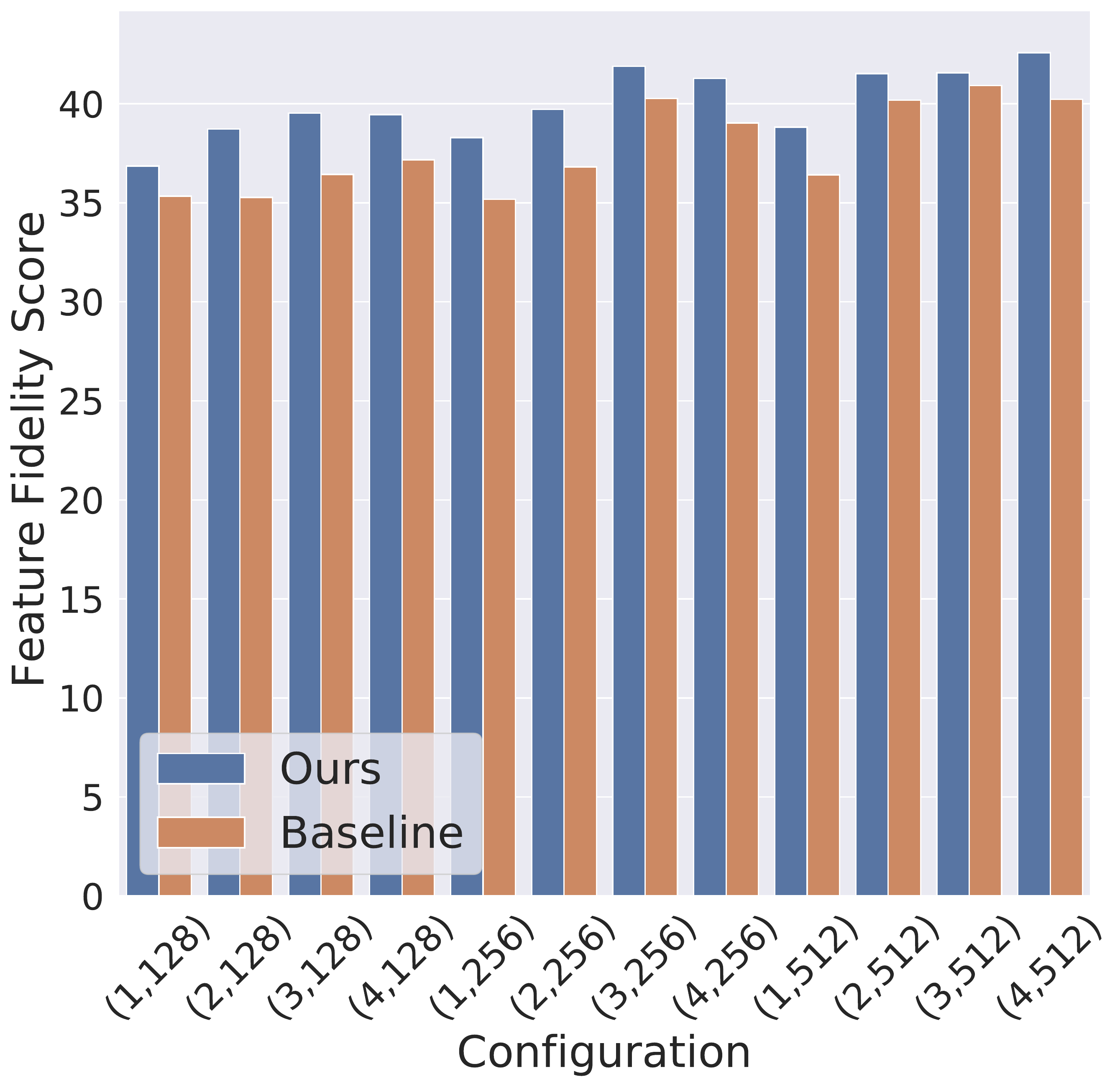} &
\includegraphics[width=10cm]{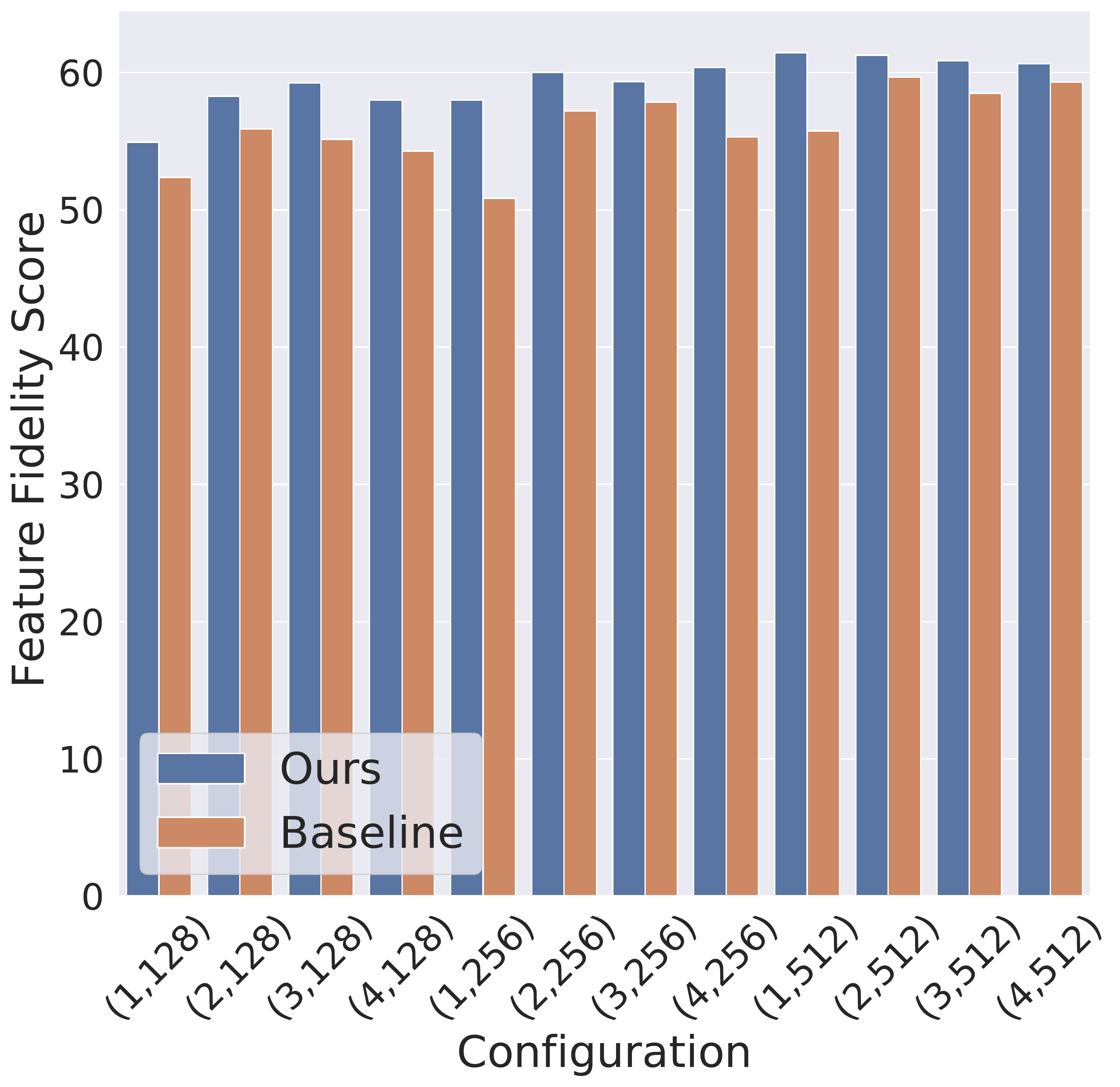} &
\includegraphics[width=10cm]{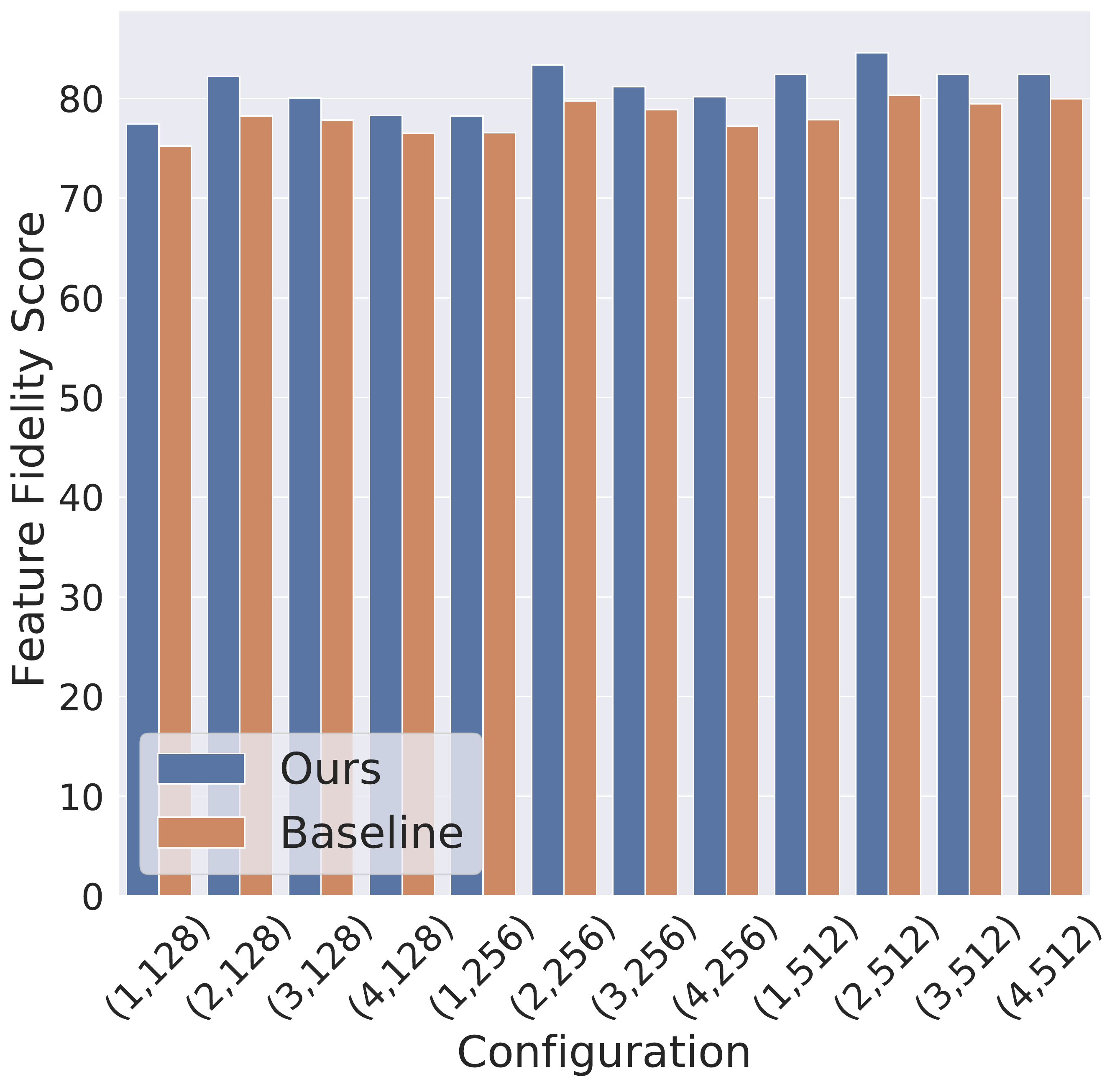} &
\includegraphics[width=10cm]{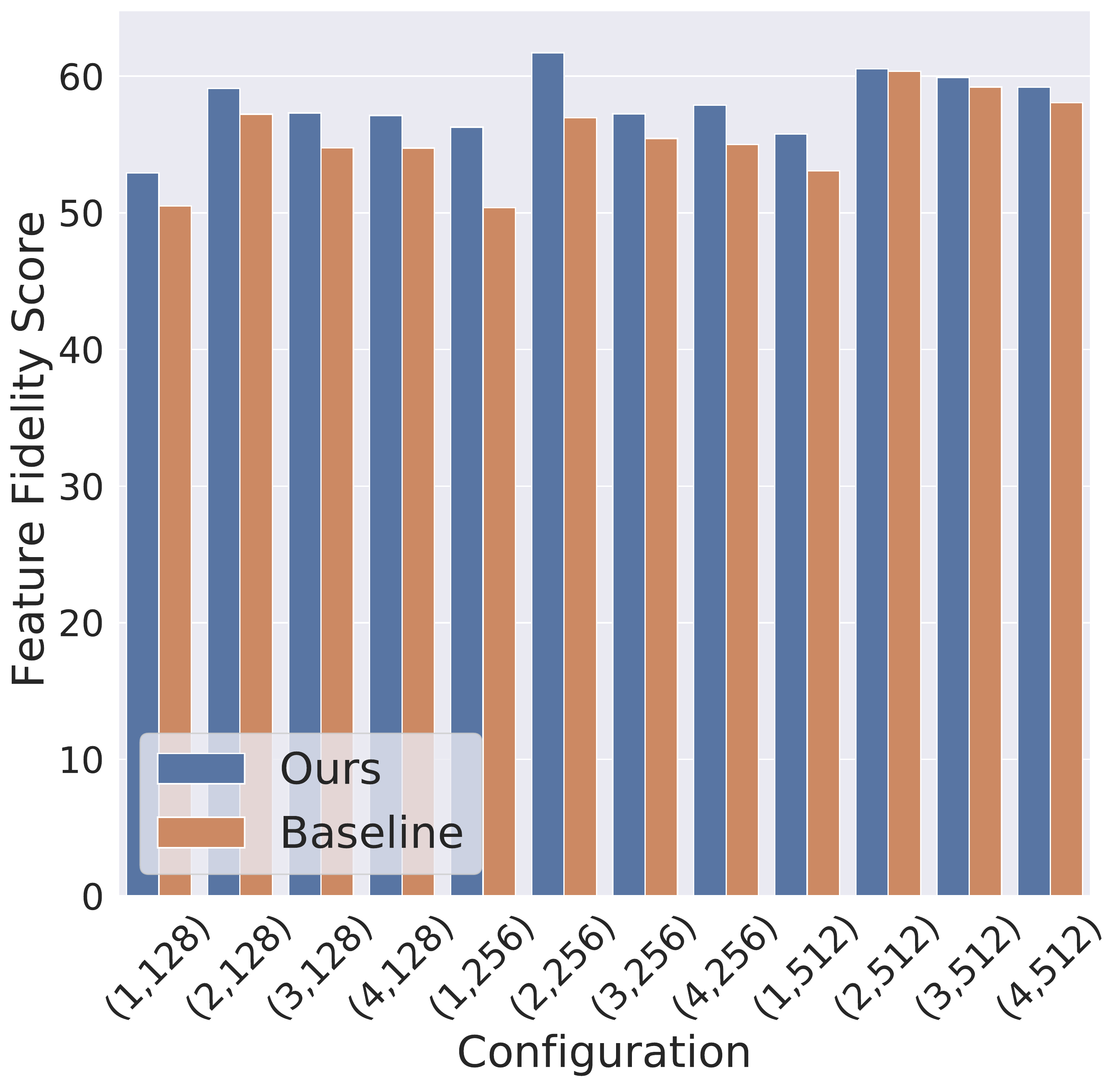} 
\\
\includegraphics[width=10cm]{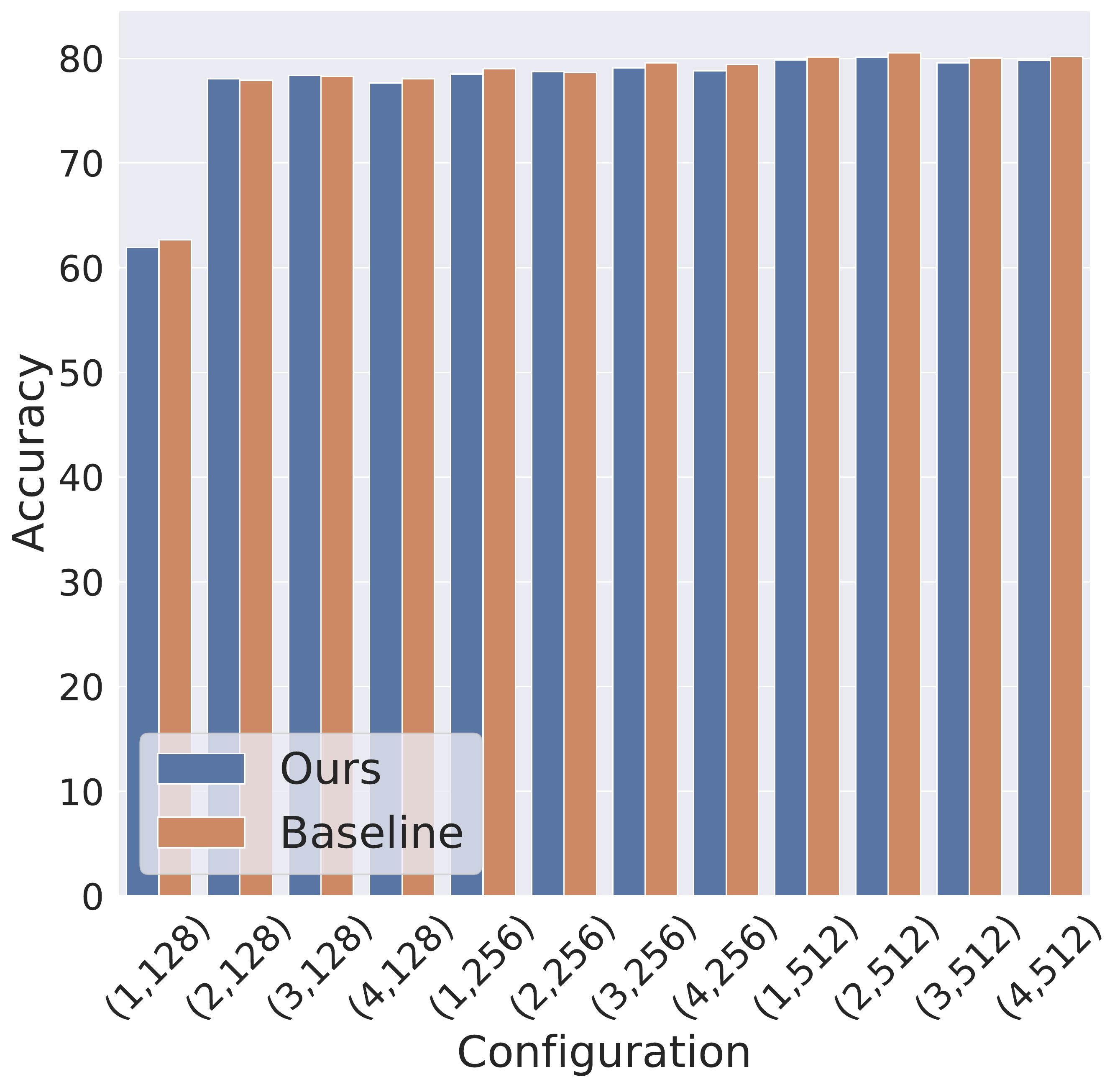} &
\includegraphics[width=10cm]{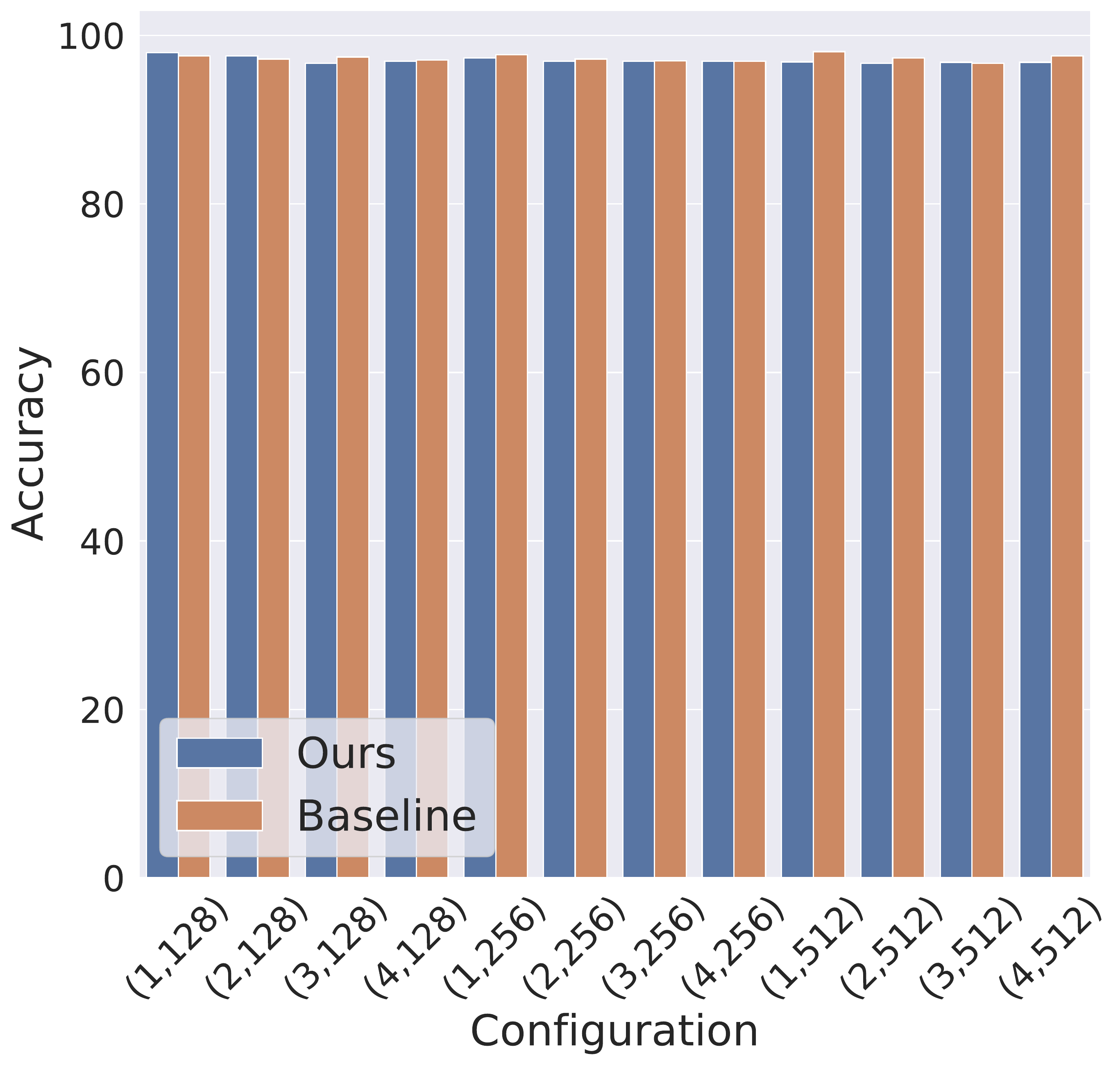} &
\includegraphics[width=10cm]{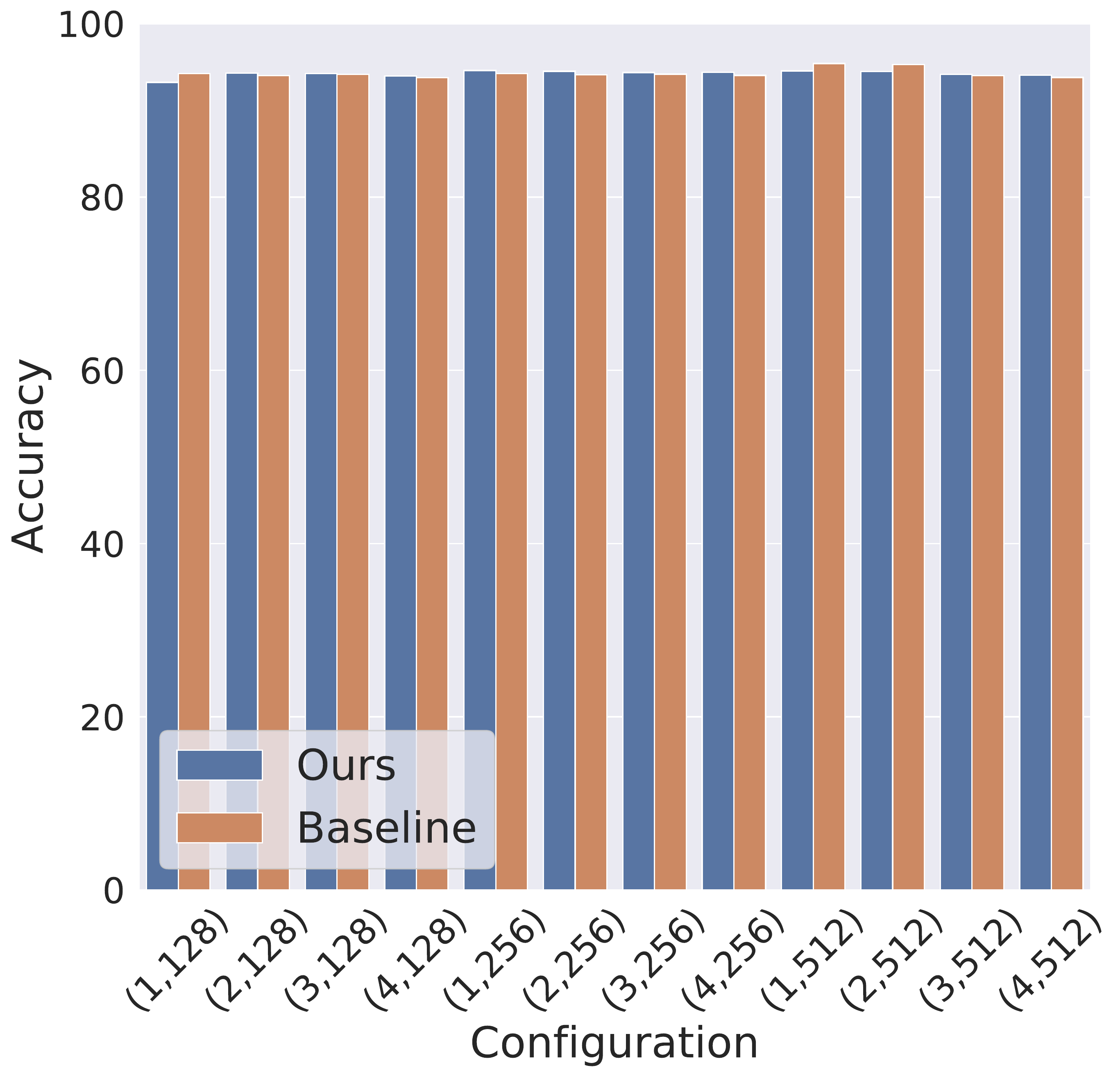} &
\includegraphics[width=10cm]{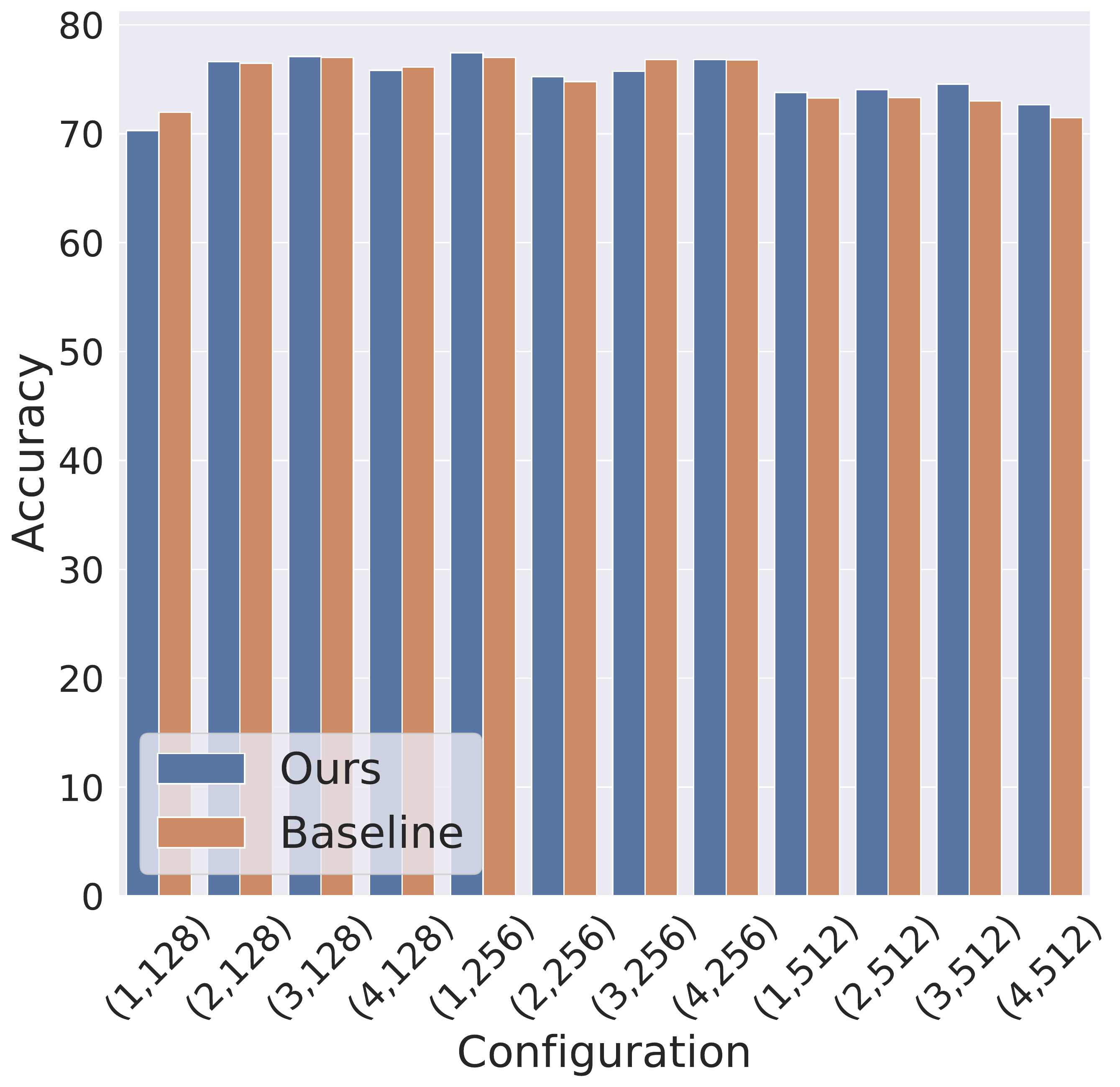} 
\\
(a) CUB-200-201 & (b) aYahoo & (c) AwA2 & (d) aPascal \\
\end{tabular}
}
\caption{{\bf Comparing the performance of our method (blue) and a standard quantized neural network (orange).} We compare the feature fidelity score ($d_{D}(F)$) {\bf(top)} and the classification accuracy rate {\bf(bottom)}. We plot the results for different configurations of feature dimensions $n$ and number of bits $k$ for binarization. Each configuration is labeled with $(n,k)$. }\label{fig:barplots}
\end{figure*}

\section{Limitations}

There are a few limitations to both our problem setting and algorithmic approach. First, the problem of `Weakly Supervised Recovery of Semantic Attributes' discussed in Sec.~\ref{sec:setup} is ill-posed in the general case. That is because we do not have an access to the ground-truth binary attributes, and potentially the function $f(x)$ could be replaced with a different function, even when assuming that $y(x)$ is a function of $f(x)$. For example, in general, if $y(x) = \textnormal{sign}(\sum_i f(x)_i)$, then, there are typically multiple functions $f(x)$ that could implement $y(x)$. In particular, it is typically unknown what the dimension of $F$ is ought to be. This is no different to other unsupervised learning settings, such as, clustering~\cite{Shalev-Shwartz:2014:UML:2621980,Ben-David_2018}, cross-domain mapping~\cite{CycleGAN2017,xia2016dual,pmlr-v70-kim17a,galanti2018the}, domain adaptation~\cite{DBLP:journals/ml/Ben-DavidBCKPV10,bendavid,DBLP:conf/alt/Mansour09,DBLP:conf/colt/MansourMR09}, causality and disentanglement~\citep{Peters2017,pmlr-v97-locatello19a} and sparse dictionary learning~\cite{7165675,8805108}, where ambiguity is an inherent aspect of the learning problem. 

This issue makes this problem challenging, especially when the number of ground-truth attributes is large as in the case of CUB-200-2011, or there are multiple redundant attributes in the dataset. 

To cope with this issue in unsupervised learning, we typically make assumptions on the structure of the target function, which narrows the space of functions captured by the algorithm. In our case, we assume that $y(x)$ can be represented as $t(f(x))$, where $t$ is a decision tree of a small depth.

\section{Conclusions}

In this paper, we introduced the problem of weakly supervised recovery of semantic attributes. This problem explores the emergence of interpretable features quantitatively, based on a set of semantic features that is unseen during training.  We present evidence that methods that are tailored to extract semantic attributes do not necessarily perform well in this metric and suggest a new approach to solving this problem. Our method is based on learning, concurrently, in the soft-intersection of two hypothesis classes: a neural network for obtaining its classification power, and a decision tree to learn features that lend themselves to classification by short logical expressions. We demonstrate that learning this way is more effective than other methods for the new task and that learning with the new regularization scheme improves the fidelity of the obtained feature map.

\section*{Acknowledgment}
This project has received funding from the European Research Council (ERC) under the European Unions Horizon 2020 research and innovation programme (grant ERC CoG 725974).

\bibliography{arxiv}
\bibliographystyle{plainnat}
\newpage
\appendix

\section{Uniform Quantization}

The uniform quantization method is fully described in Alg.~\ref{alg:quant}. For a given function $p$, we denote by $\partial p$ the estimated gradients of $p$ w.r.t $x$. In Alg.~\ref{alg:quant}, the estimated gradient of $\tilde{z}$ is computed as the gradient of $\min(\max(z_{init},q_{\min}),q_{\max})$ (since $\partial \mathrm{round}(x) = 1$ when applying STE~\cite{bengio2013estimating}), which is simply $\mathds{1}[z_{init} \in (q_{\min},q_{\max})] \cdot \frac{\partial z_{init}}{\partial x}$, where $\frac{\partial z_{init}}{\partial x}$ is the gradient of $z_{init}$ w.r.t $x$. $\partial \tilde{q}_i$ and $\partial{q}$ are defined similarly, see lines 10-11 in Alg.~\ref{alg:quant}.

\begin{algorithm}[t]
 \caption{The uniform quantization method (forward and backward passes)}
 \label{alg:quant}
 \begin{algorithmic}[1]
 \REQUIRE $x$ is tensor to be quantized; $r$ number of bits.
 \STATE $q_{\min}, q_{\max} = 0,  2^{r} - 1$;
 \STATE $x_{\min}, x_{\max} = \min_j \{x_j\}, \max_j \{x_j\}$;
 \STATE $s=\frac{x_{\max} - x_{\min}}{q_{\max} - q_{\min}}$;
 \STATE $z_{\text{init}} = \frac{q_{\min} - x_{\min}}{s}$;
 \STATE $z = \min(\max(z_{\text{init}},q_{\min}),q_{\max})$;
 \STATE $\tilde{z} = \mathrm{round}(z)$;
 \STATE $\forall i:~\tilde{q}_i = \tilde{z} + \frac{x_i}{s}$;
 \STATE $\forall i:~q_i = \mathrm{round}(\min(\max(\tilde{q}_i, q_{\min}), q_{\max}))$;
 \STATE $\partial \tilde{z} = \mathds{1}[z_{init} \in (q_{\min},q_{\max})] \cdot \frac{\partial z_{init}}{\partial x}$; 
 \STATE $\forall i:~\partial \tilde{q}_i = \partial \tilde{z} + \frac{\partial (x_i/s)}{\partial x}$; 
 \STATE $\forall i:~\partial q_i = \mathds{1}[\tilde{q_i} \in (q_{\min},q_{\max})] \cdot \partial \tilde{q}_i$;
 \STATE {\bf return} $q$, $\partial q$;

  \end{algorithmic}
\end{algorithm}


\section{Additional Experimental Details}

{\bf Datasets\quad} (i) The aYahoo dataset~\citep{farhadi2009describing} consists of 1850 training images and 794 test images, (ii) the aPascal dataset~\citep{farhadi2009describing} has 2869 training images and 2227 test images, (iii) in the AwA2 dataset~\citep{xian2018zero} we used 26125 training images and 11197 test images, and (iv) the CUB-200-2011 dataset~\citep{WahCUB_200_2011} contains 5994 training images and 5794 test images. We used the standard train/test splits for each one of the datasets, except for AwA2, where we follow a 0.7/0.3 random split because there is no standard partition for this dataset.

{\bf Runtime and Infrastructure\quad} The experiments were run on three GeForce RTX 3090 GPUs. Our method completes an epoch every 7 minutes on aYahoo, 10 minutes on aPascal, 1 hour on AwA2, and 1 hour on CUB-200-2011. On average our method runs for approximately 12 epochs on both aYahoo and aPascal, 2-4 epochs on AwA2, and 35-40 on CUB-200-2011. 

\section{Qualitative Analysis}

We provide a qualitative analysis of the advantage of our method over a simple neural network with a quantized representation layer. In this experiment, for each ground-truth attribute $q_i(x)$ in the dataset, we compute the degree it appears in the learned binarized representation layer $b \circ q \circ F$ for each method: $d_{D}(q_i;b \circ q \circ F)$. In Figs.~\ref{fig:ayahoo}-\ref{fig:awa} we plot the values of $d_{D}(q_i;b \circ q \circ F)$ for each attribute in the dataset as a heatmap. We also report the number of attributes for which our method achieves a higher score (by at least $\epsilon=0.05$) and vice versa. 

\begin{figure}[!ht]
\centering
\includegraphics[width=1\textwidth]{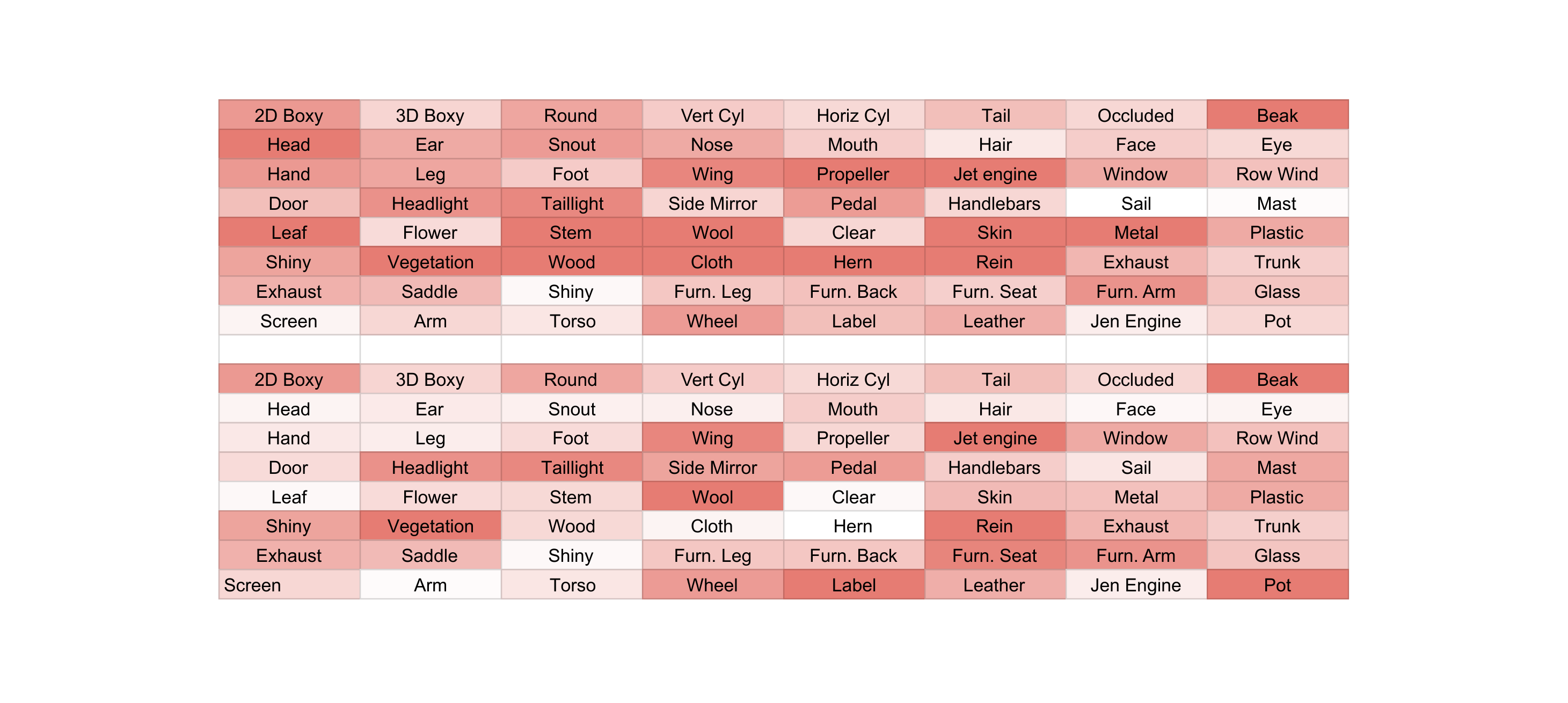}
\caption{{\bf Heatmaps of $d_{D}(q_i;b \circ q \circ F)$ on the aYahoo dataset~\citep{farhadi2009describing}.} In {\bf (top)} we report the results of our method and {\bf (bottom)} reports the results of a standard neural network with a quantized representation layer. Our method has a higher score on 23 attributes and a lower score on 4 attributes.}
\label{fig:ayahoo}
\end{figure}
\begin{figure}[!ht]
\centering
\includegraphics[width=1\textwidth]{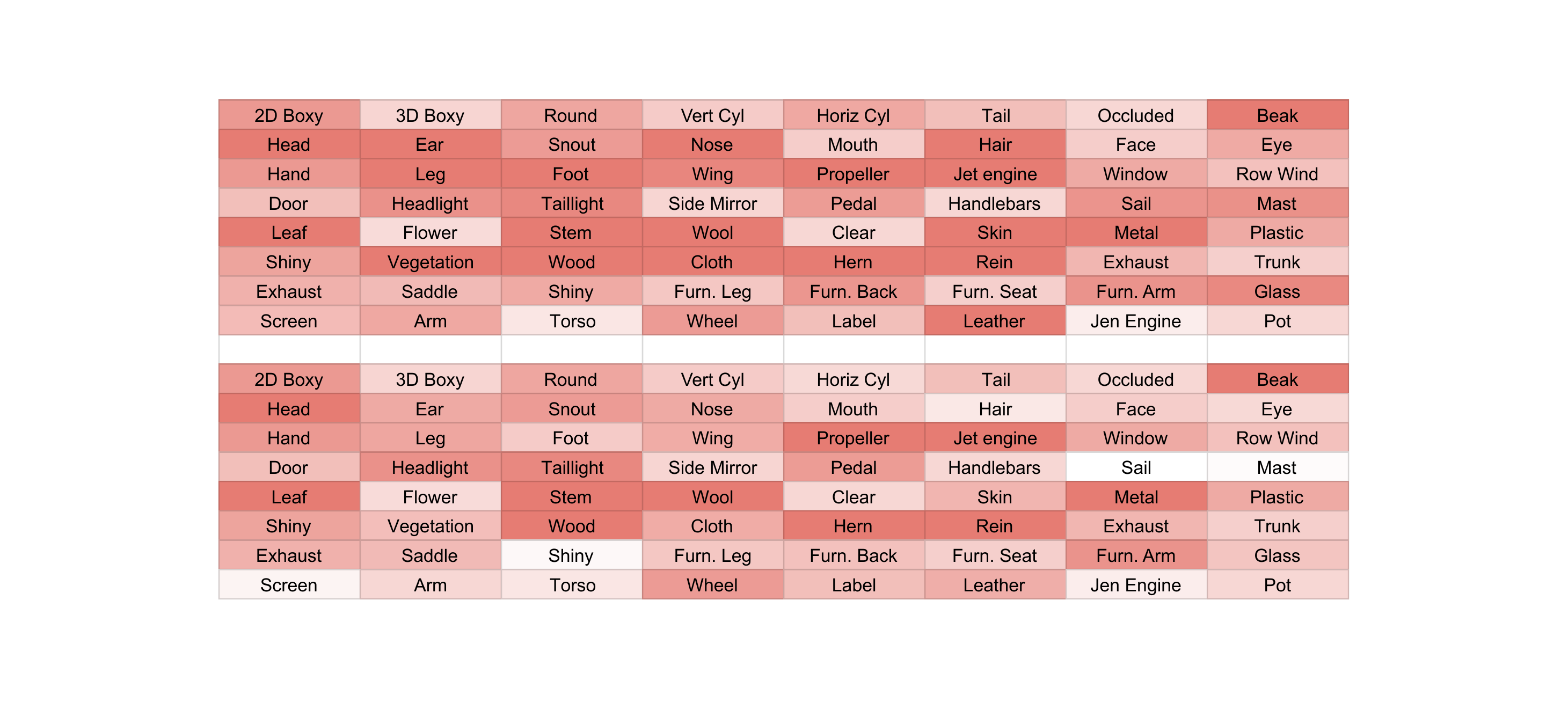}
\caption{{\bf Heatmaps of $d_{D}(q_i;b \circ q \circ F)$ on the aPascal dataset~\citep{farhadi2009describing}.} In {\bf (top)} we report the results of our method and {\bf (bottom)} reports the results of a standard neural network with a quantized representation layer. Our method has a higher score on 38 attributes and a lower score on 3 attributes. }
\label{fig:apascal}
\end{figure}
\begin{figure}[!ht]
\centering
\includegraphics[width=1\textwidth]{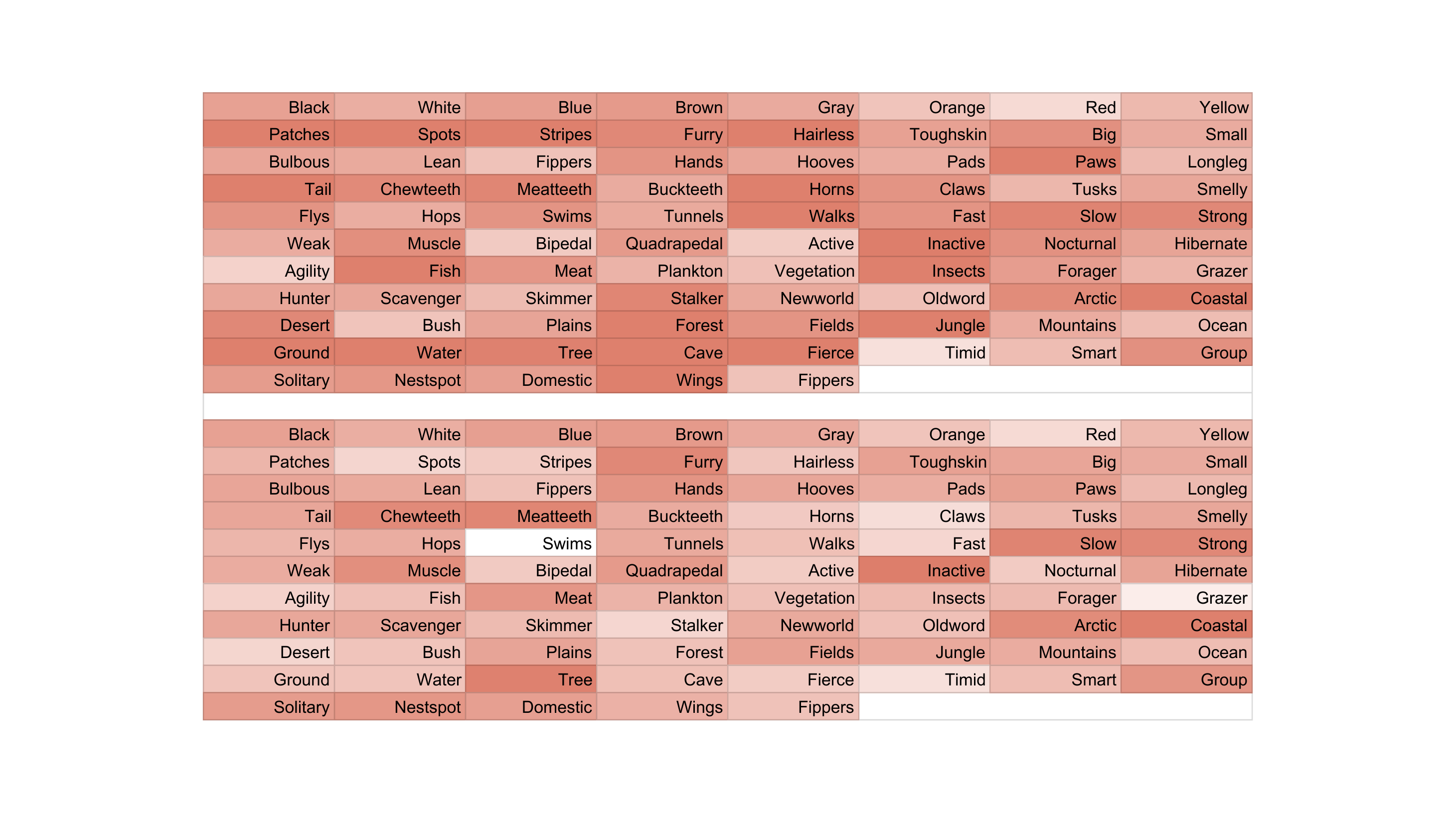}
\caption{{\bf Heatmaps of $d_{D}(q_i;b \circ q \circ F)$ on the AwA2 dataset~\citep{xian2018zero}.} In {\bf (top)} we report the results of our method and {\bf (bottom)} reports the results of a standard neural network with a quantized representation layer. Our method has a higher score on 48 attributes and a lower score on 3 attributes.}
\label{fig:awa}
\end{figure}



\newpage

\section{Proofs of Props.~\ref{lem:1} and~\ref{lem:2}}

\lemOne*

\begin{proof}
First, we would like to prove that $\mathcal{Q}(\theta_{t},\omega_{t})$ is monotonically decreasing. We notice that for each $t \in \mathbb{N}$, we have:
\begin{equation}
\mathcal{Q}(\theta_{t},\omega_{t}) \leq \mathcal{Q}(\theta_{t},\omega_{t-1})
\end{equation}
since $\omega_{t}$ is the global minimizer of $\mathcal{Q}(\theta_{t},\omega)$. In addition, by the proof of (cf.~\citet{10.5555/2670022}, Thm.~2.1.14), we have:
\begin{equation}\label{eq:a}
\begin{aligned}
\mathcal{Q}(\theta_{t},\omega_{t-1}) 
\leq \mathcal{Q}(\theta_{t-1},\omega_{t-1}) - \eta \| \nabla_{\theta}\mathcal{Q}(\theta_{t},\omega_{t-1})\|^2_2
\leq \mathcal{Q}(\theta_{t-1},\omega_{t-1}),
\end{aligned}
\end{equation}
where $\eta := \mu (1-0.5\beta \mu) > 0$. Therefore, we conclude that  $\mathcal{Q}(\theta_{t},\omega_{t})$ is indeed monotonically decreasing. Since $\mathcal{Q}$ is non-negative, we conclude that $\mathcal{Q}(\theta_t,\omega_t)$ is a convergent sequence. In particular, $\theta_{t}$ is bounded. Otherwise, $\mathcal{Q}(\theta_{t},\omega_{t}) \geq \mathcal{L}_{\mathcal{S}}[h_{\theta_{t}},y] \to \infty$ in contradiction to the fact that $\mathcal{Q}(\theta_{t},\omega_{t})$ is a convergent sequence. 

Next, by applying the convexity of $\mathcal{Q}(\theta,\omega_{t-1})$ (as a function of $\theta$), we have:
\begin{equation}\label{eq:b}
\begin{aligned}
\mathcal{Q}(\theta_{t},\omega_{t-1}) - \mathcal{Q}(\theta^*_{t},\omega_{t-1}) \leq \langle \nabla_{\theta} \mathcal{Q}(\theta,\omega_{t-1}) , \theta_{t} - \theta^*_{t} \rangle 
\leq \|\nabla_{\theta}\mathcal{Q}(\theta_{t},\omega_{t-1}) \|_2 \cdot \|\theta_{t} - \theta^*_{t} \|_2, 
\end{aligned}
\end{equation}
where $\theta^*_{t} = \arg\min_{\theta} \mathcal{Q}(\theta,\omega_{t})$. By combining Eqs.~\eqref{eq:a} and~\eqref{eq:b}, we have:
\begin{equation}
\begin{aligned}
&\mathcal{Q}(\theta_{t},\omega_{t}) 
\leq \mathcal{Q}(\theta_{t},\omega_{t-1}) 
\leq \mathcal{Q}(\theta_{t-1},\omega_{t-1}) - \frac{\eta}{\|\theta_{t} - \theta^*_{t}\|^2_2} \left( \mathcal{Q}(\theta_{t-1},\omega_{t-1}) - \min_{\theta} \mathcal{Q}(\theta,\omega_{t-1}) \right)^2
\end{aligned}
\end{equation}
In particular, 
\begin{equation}
\begin{aligned}
\mathcal{Q}(\theta_{t-1},\omega_{t-1}) - \mathcal{Q}(\theta_{t},\omega_{t}) \geq \frac{\eta}{\|\theta_{t} - \theta^*_{t}\|^2_2} \left( \mathcal{Q}(\theta_{t-1},\omega_{t-1}) - \min_{\theta} \mathcal{Q}(\theta,\omega_{t-1}) \right)^2
\end{aligned}
\end{equation}
Since the left hand side tends to zero and the right hand side is lower bounded by zero, by the sandwich theorem, the right hand side tends to zero as well. Since both $\theta_{t}$ and $\theta^*_{t}$ are bounded sequences ($\{\arg\min_{\theta} \mathcal{Q}(\theta,\omega) \mid \omega \in \Omega\}$ is well-defined and bounded), we conclude that $\lim_{t \to \infty} \mathcal{Q}(\theta_{t},\omega_{t}) = \lim_{t \to \infty} \min_{\theta} \mathcal{Q}(\theta,\omega_{t})$. We also have: $\mathcal{Q}(\theta_{t},\omega_{t}) = \min_{\omega} \mathcal{Q}(\theta_{t},\omega)$ by definition, and therefore, $\lim_{t \to \infty} \mathcal{Q}(\theta_{t},\omega_{t}) = \lim_{t \to \infty} \min_{\omega} \mathcal{Q}(\theta_{t},\omega)$ as well.
\end{proof}

\lemTwo*

\begin{proof}
Firstly, since $\mathcal{Q}(\theta,\omega)$ is a twice-continuously differentiable, Lipschitz continuous and $\beta$-smooth function, by Prop.~3.4 and Cor.~3.1 in~\citep{song2017block}, $(\theta_{t},\omega_{t})$ converge to a local minima $(\hat{\theta},\hat{\omega})$ of $\mathcal{Q}$. Therefore, it is left to show that $(\hat{\theta},\hat{\omega})$ is also an equilibrium point. We note that $\mathcal{Q}(\theta,\omega)$ is element-wise convex and $\beta$-smooth. By the proof of (cf.~\citet{10.5555/2670022}, Thm.~2.1.14), we have:
\begin{equation}\label{eq:c}
\begin{aligned}
\mathcal{Q}(\theta_{t+1},\omega_{t}) \leq& \mathcal{Q}(\theta_{t},\omega_{t}) - \eta \| \nabla_{\theta}\mathcal{Q}(\theta_{t},\omega_{t})\|^2_2,
\end{aligned}
\end{equation}
where $\eta := \mu (1-0.5\beta \mu) > 0$. By applying the convexity of $\mathcal{Q}(\theta,\omega_{t})$ (as a function of $\theta$), we have:
\begin{equation}\label{eq:d}
\begin{aligned}
\mathcal{Q}(\theta_{t},\omega_{t}) - \mathcal{Q}(\theta^*_{t},\omega_{t}) \leq& \langle \nabla_{\theta} \mathcal{Q}(\theta,\omega_{t}) , \theta_{t} - \theta^*_{t} \rangle \\
\leq& \|\nabla_{\theta}\mathcal{Q}(\theta_{t},\omega_{t}) \|_2 \cdot \|\theta_{t} - \theta^*_{t} \|_2, \\
\end{aligned}
\end{equation}
where $\theta^*_{t} = \arg\min_{\theta} \mathcal{Q}(\theta,\omega_{t})$ and $\omega^*_{t} = \arg\min_{\omega} \mathcal{Q}(\theta_{t+1},\omega)$. By combining Eqs.~\eqref{eq:c} and~\eqref{eq:d}, we have:
\begin{equation}
\begin{aligned}
\mathcal{Q}(\theta_{t+1},\omega_{t}) \leq \mathcal{Q}(\theta_{t},\omega_{t}) - \frac{\eta}{\|\theta_{t} - \theta^*_{t}\|^2_2} \left( \mathcal{Q}(\theta_{t},\omega_{t}) - \min_{\theta} \mathcal{Q}(\theta,\omega_{t}) \right)^2
\end{aligned}
\end{equation}
and similarly, we have:
\begin{equation}
\begin{aligned}
\mathcal{Q}(\theta_{t+1},\omega_{t+1}) \leq \mathcal{Q}(\theta_{t+1},\omega_{t})  - \frac{\eta}{\|\omega_{t} - \omega^*_{t}\|^2_2} \left( \mathcal{Q}(\theta_{t+1},\omega_{t}) - \min_{\omega} \mathcal{Q}(\theta_{t+1},\omega) \right)^2
\end{aligned}
\end{equation}

In particular, we have:
\begin{equation}\label{eq:beta}
\begin{aligned}
\mathcal{Q}(\theta_{t+1},\omega_{t+1}) \leq& \mathcal{Q}(\theta_{t},\omega_{t}) - \frac{\eta}{\|\theta_{t} - \theta^*_{t}\|^2_2} \left( \mathcal{Q}(\theta_{t},\omega_{t}) - \min_{\theta} \mathcal{Q}(\theta,\omega_{t}) \right)^2 \\
&- \frac{\eta}{\|\omega_{t} - \omega^*_{t}\|^2_2} \left( \mathcal{Q}(\theta_{t+1},\omega_{t}) - \min_{\omega} \mathcal{Q}(\theta_{t+1},\omega) \right)^2
\end{aligned}
\end{equation}
Therefore, the sequence $\mathcal{Q}(\theta_{t},\omega_{t})$ is monotonically decreasing. Since $\mathcal{Q}(\theta_{t},\omega_{t})$ is non-negative, it converges to some non-negative constant. By Eq.~\eqref{eq:beta}, we have:
\begin{equation}
\begin{aligned}
&\mathcal{Q}(\theta_{t+1},\omega_{t+1}) - \mathcal{Q}(\theta_{t},\omega_{t})\\
\geq& \frac{\eta}{\|\theta_{t} - \theta^*_{t}\|^2_2} \left( \mathcal{Q}(\theta_{t},\omega_{t}) - \min_{\theta} \mathcal{Q}(\theta,\omega_{t}) \right)^2 + \frac{\eta}{\|\omega_{t} - \omega^*_{t}\|^2_2} \left( \mathcal{Q}(\theta_{t+1},\omega_{t}) - \min_{\omega} \mathcal{Q}(\theta_{t+1},\omega) \right)^2
\end{aligned}
\end{equation}
Since the left hand side tends to zero and the right hand side is lower bounded by zero, by the sandwich theorem, the sequences $\frac{\eta}{ \|\theta_{t} - \theta^*_{t}\|^2_2} \left( \mathcal{Q}(\theta_{t},\omega_{t}) - \min_{\theta} \mathcal{Q}(\theta,\omega_{t}) \right)^2 $ and $ \frac{\eta}{\|\omega_{t} - \omega^*_{t}\|^2_2} \left( \mathcal{Q}(\theta_{t+1},\omega_{t}) - \min_{\omega} \mathcal{Q}(\theta_{t+1},\omega) \right)^2$ tend to zero. We note that $\theta_{t}$ and $\omega_{t}$ are convergent sequences, and therefore, are bounded as well. In addition, we recall that the sets $\{\arg\min_{\theta} \mathcal{Q}(\theta,\omega) \mid \omega \in \Omega\}$ and $\{\arg\min_{\omega} \mathcal{Q}(\theta,\omega) \mid \theta \in \Theta\}$ are well-defined and bounded. Hence, the terms $\|\theta_{t} - \theta^*_{t}\|^2_2$ and $\|\omega_{t} - \omega^*_{t}\|^2_2$ are bounded. Thus, we conclude that the sequences $ \left( \mathcal{Q}(\theta_{t},\omega_{t}) - \min_{\theta} \mathcal{Q}(\theta,\omega_{t}) \right)^2 $ and $\left( \mathcal{Q}(\theta_{t+1},\omega_{t}) - \min_{\omega} \mathcal{Q}(\theta_{t+1},\omega) \right)^2$ tend to zero. In particular,
\begin{equation}
\begin{aligned}
\lim_{t \to \infty} \mathcal{Q}(\theta_{t},\omega_{t}) 
&= \lim_{t \to \infty} \min_{\theta} \mathcal{Q}(\theta,\omega_{t}) \\
\lim_{t \to \infty} \mathcal{Q}(\theta_{t+1},\omega_{t})  &= \lim_{t \to \infty} \min_{\omega} \mathcal{Q}(\theta_{t},\omega)
\end{aligned}
\end{equation}
Since $\mathcal{Q}$ is a Lipschitz continuous function, we have: (i) $\lim_{t \to \infty} \mathcal{Q}(\theta_{t+1},\omega_{t}) = \lim_{t \to \infty} \mathcal{Q}(\theta_{t},\omega_{t}) = \mathcal{Q}(\hat{\theta},\hat{\omega})$, (ii) $\lim_{t \to \infty} \min_{\theta} \mathcal{Q}(\theta,\omega_{t}) =  \min_{\theta} \mathcal{Q}(\theta,\hat{\omega})$ and (iii) $\lim_{t \to \infty} \min_{\omega} \mathcal{Q}(\theta_{t},\omega) =  \min_{\omega} \mathcal{Q}(\hat{\theta},\omega)$. Therefore, we finally conclude that $(\hat{\theta},\hat{\omega})$ is an equilibrium point of $\mathcal{Q}$.
\end{proof}




\end{document}


\twocolumn[
\icmltitle{Supplementary Material \\ Intersection Regularization for Extracting Semantic Attributes}



\icmlsetsymbol{equal}{*}

\begin{icmlauthorlist}
\icmlauthor{Aeiau Zzzz}{equal,to}
\icmlauthor{Bauiu C.~Yyyy}{equal,to,goo}
\icmlauthor{Cieua Vvvvv}{goo}
\icmlauthor{Iaesut Saoeu}{ed}
\icmlauthor{Fiuea Rrrr}{to}
\icmlauthor{Tateu H.~Yasehe}{ed,to,goo}
\icmlauthor{Aaoeu Iasoh}{goo}
\icmlauthor{Buiui Eueu}{ed}
\icmlauthor{Aeuia Zzzz}{ed}
\icmlauthor{Bieea C.~Yyyy}{to,goo}
\icmlauthor{Teoau Xxxx}{ed}
\icmlauthor{Eee Pppp}{ed}
\end{icmlauthorlist}

\icmlaffiliation{to}{Department of Computation, University of Torontoland, Torontoland, Canada}
\icmlaffiliation{goo}{Googol ShallowMind, New London, Michigan, USA}
\icmlaffiliation{ed}{School of Computation, University of Edenborrow, Edenborrow, United Kingdom}

\icmlcorrespondingauthor{Cieua Vvvvv}{c.vvvvv@googol.com}
\icmlcorrespondingauthor{Eee Pppp}{ep@eden.co.uk}

\icmlkeywords{Machine Learning, ICML}

\vskip 0.3in
]



\printAffiliationsAndNotice{\icmlEqualContribution} 

\section{Uniform Quantization}

The uniform quantization method is fully described in Alg.~\ref{alg:quant}. For a given function $p$, we denote by $\partial p$ the estimated gradients of $p$ w.r.t $x$. In Alg.~\ref{alg:quant}, the estimated gradient of $\tilde{z}$ is computed as the gradient of $\min(\max(z_{init},q_{\min}),q_{\max})$ (since $\partial \mathrm{round}(x) = 1$ when applying STE~\cite{bengio2013estimating}), which is simply $\mathds{1}[z_{init} \in (q_{\min},q_{\max})] \cdot \frac{\partial z_{init}}{\partial x}$, where $\frac{\partial z_{init}}{\partial x}$ is the gradient of $z_{init}$ w.r.t $x$. $\partial \tilde{q}_i$ and $\partial{q}$ are defined similarly, see lines 10-11 in Alg.~\ref{alg:quant}.

\begin{algorithm}[t]
 \caption{The uniform quantization method (forward and backward passes)}
 \label{alg:quant}
 \begin{algorithmic}[1]
 \REQUIRE $x$ is tensor to be quantized; $r$ number of bits.
 \STATE $q_{\min}, q_{\max} = 0,  2^{r} - 1$;
 \STATE $x_{\min}, x_{\max} = \min_j \{x_j\}, \max_j \{x_j\}$;
 \STATE $s=\frac{x_{\max} - x_{\min}}{q_{\max} - q_{\min}}$;
 \STATE $z_{\text{init}} = \frac{q_{\min} - x_{\min}}{s}$;
 \STATE $z = \min(\max(z_{\text{init}},q_{\min}),q_{\max})$;
 \STATE $\tilde{z} = \mathrm{round}(z)$;
 \STATE $\forall i:~\tilde{q}_i = \tilde{z} + \frac{x_i}{s}$;
 \STATE $\forall i:~q_i = \mathrm{round}(\min(\max(\tilde{q}_i, q_{\min}), q_{\max}))$;
 \STATE $\partial \tilde{z} = \mathds{1}[z_{init} \in (q_{\min},q_{\max})] \cdot \frac{\partial z_{init}}{\partial x}$; 
 \STATE $\forall i:~\partial \tilde{q}_i = \partial \tilde{z} + \frac{\partial (x_i/s)}{\partial x}$; 
 \STATE $\forall i:~\partial q_i = \mathds{1}[\tilde{q_i} \in (q_{\min},q_{\max})] \cdot \partial \tilde{q}_i$;
 \STATE {\bf return} $q$, $\partial q$;

  \end{algorithmic}
\end{algorithm}


\section{Additional Experimental Details}

{\bf Datasets\quad} The aYahoo dataset~\citep{farhadi2009describing} consists of 1850 training images and 794 test images, (ii) aPascal dataset~\citep{farhadi2009describing} has 2869 training images and 2227 test images, (iii) AwA2~\citep{xian2018zero} we used 26125 training images and 11197 test images, (iv) CUB-200-2011~\citep{WahCUB_200_2011} contains 5994 training images and 5794 test images. We used the standard train/test splits for each one of the datasets, except AwA2, where we  follow a 0.7/0.3 random split because there is no standard partition for this dataset.

{\bf Runtime and Infrastructure\quad} The experiments were run on three GeForce RTX 3090 GPUs. Our method completes an epoch every 7 minutes on aYahoo, 10 minutes on aPascal, 1 hours on AwA2 and 1 hours on CUB-200-2011. On average our method runs for approximately 12 epochs on both aYahoo and aPascal, 2-4 epochs on AwA2 and 35-40 on CUB-200-2011. 

\section{Additional Experiments}

In Fig.~\ref{fig:reg} we plot the results of the same experiment as in Fig.~4 in the main text for the CUB-200-2011~\citep{WahCUB_200_2011} dataset. These results conform with the observations summarized in the main text. 

\begin{figure}[t]
\centering
\begin{tabular}{ 
c @{\hspace{0.2\tabcolsep}}
c @{\hspace{0.2\tabcolsep}}
}
\includegraphics[scale=0.255]{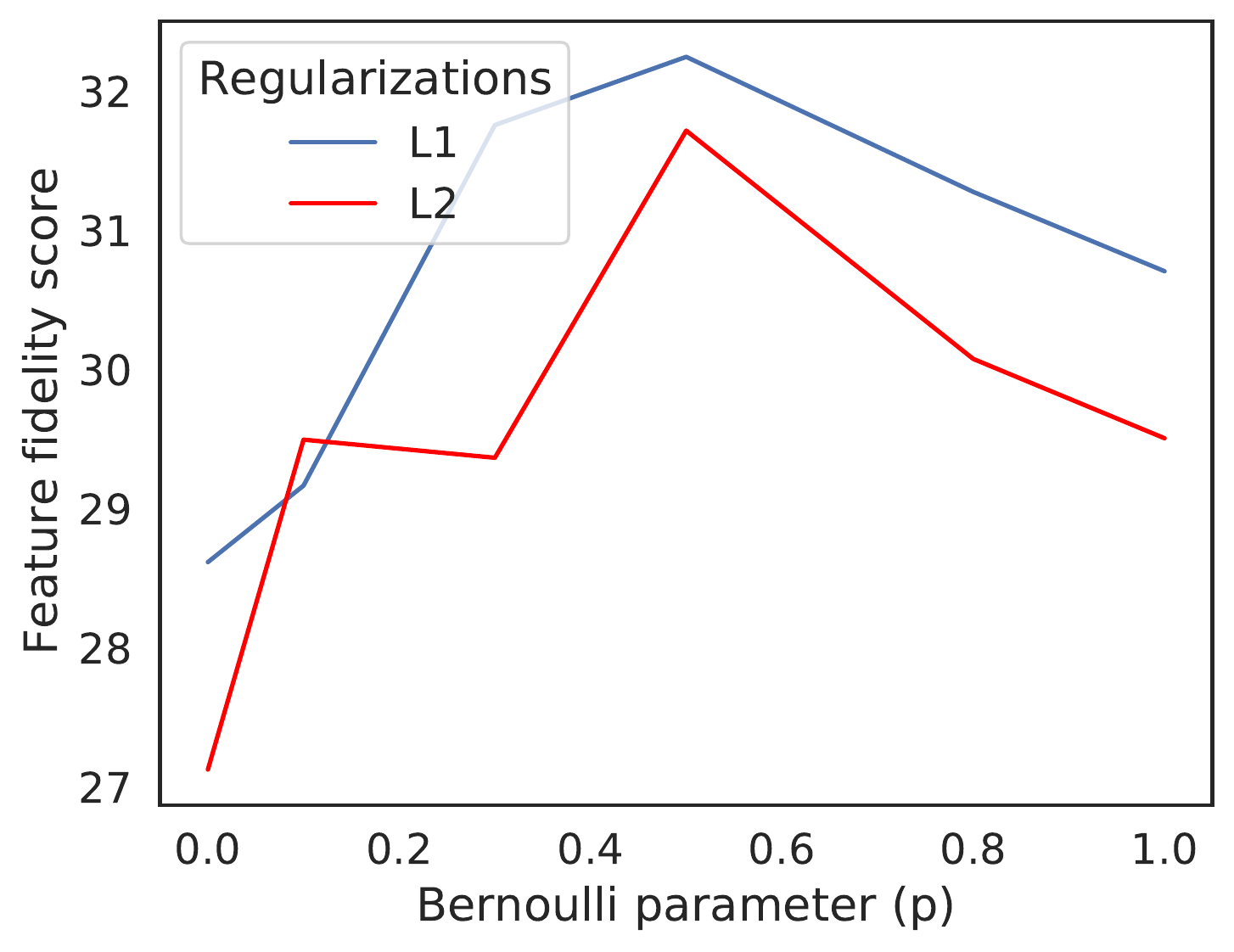}
& \includegraphics[scale=0.255]{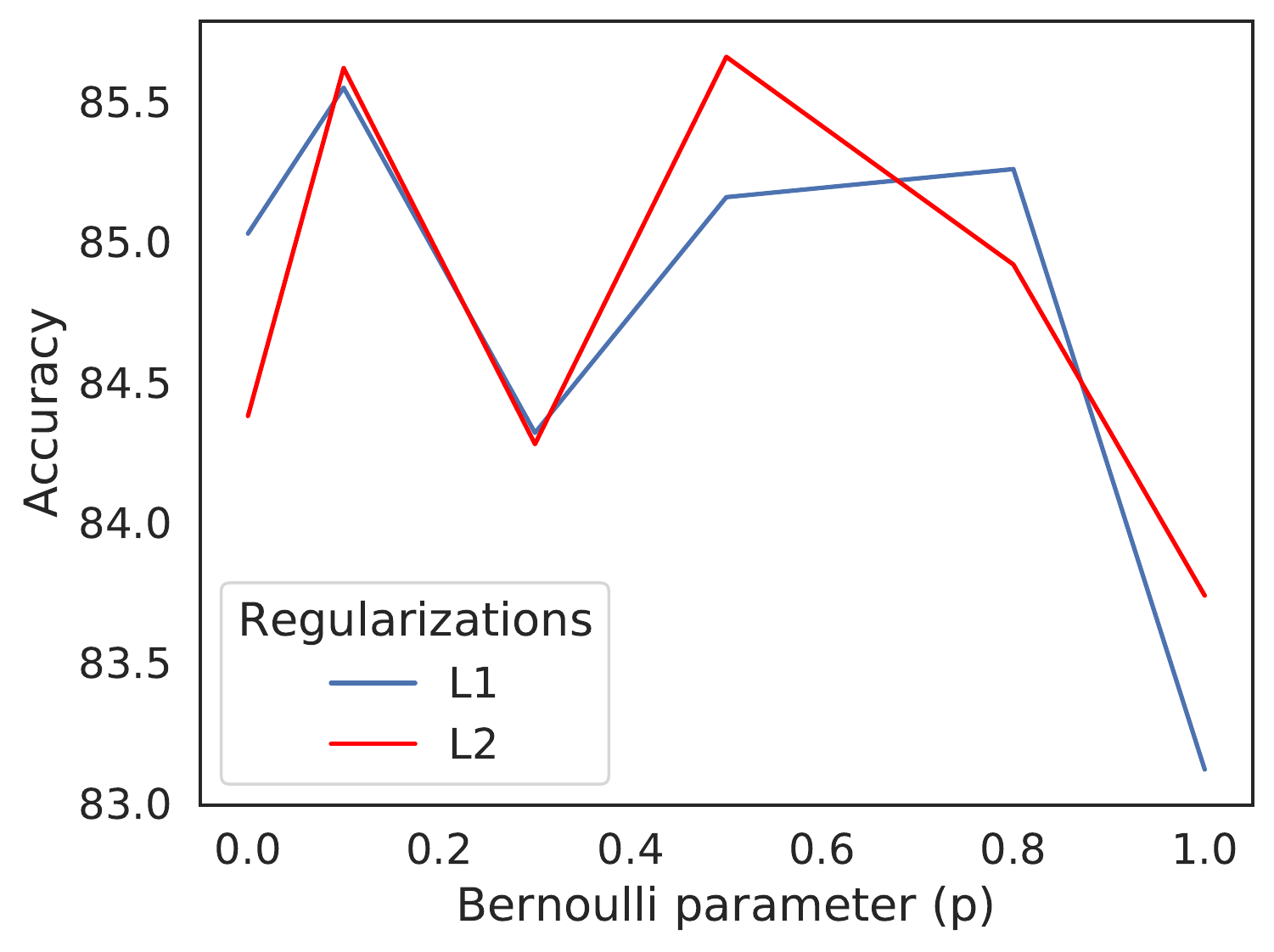} \\
(a) & (b)  \\
\end{tabular}
\caption{{Comparing $d_D(F)$ {\bf(a)} and the accuracy rate {\bf(b)} when varying $p$ for both $L_1$ and $L_2$ regularizations.}}
    \label{fig:reg}
\end{figure}

\section{Proofs of Props.~\ref{lem:1} and~\ref{lem:2}}

\begin{prop}\label{lem:1}
Assume that $\mathcal{Q}(\theta,\omega)$ is convex and $\beta$-smooth w.r.t $\theta$ for any fixed value of $\omega$. Let $\theta_1$ be some initialization and $\omega_1  \in \arg\min_{\omega} \mathcal{Q}(\theta_1,\omega)$. We define $\theta_t$ to be the weights produced after $t$ iterations of applying Gradient Descent on $\mathcal{Q}(\theta,\omega_{t-1})$ over $\theta$ with learning rate $\mu < \beta^{-1}$ and $\omega_t = \arg\min_{\omega} \mathcal{Q}(\theta_{t-1},\omega)$. Then,  we have:
\begin{equation*}
\begin{aligned}
\lim_{t \to \infty} \mathcal{Q}(\theta_{t},\omega_{t}) 
= \lim_{t \to \infty} \min_{\theta} \mathcal{Q}(\theta,\omega_{t}) = \lim_{t \to \infty} \min_{\omega} \mathcal{Q}(\theta_{t},\omega)
\end{aligned}
\end{equation*}
\end{prop}

\begin{proof}
First, we would like to prove that $\mathcal{Q}(\theta_{t},\omega_{t})$ is monotonically decreasing. We notice that for each $t \in \mathbb{N}$, we have:
\begin{equation}
\mathcal{Q}(\theta_{t},\omega_{t}) \leq \mathcal{Q}(\theta_{t},\omega_{t-1})
\end{equation}
since $\omega_{t}$ is the global minimizer of $\mathcal{Q}(\theta_{t},\omega)$. In addition, by the proof of (cf.~\citet{10.5555/2670022}, Thm.~2.1.14), we have:
\begin{equation}\label{eq:a}
\begin{aligned}
\mathcal{Q}(\theta_{t},\omega_{t-1}) 
\leq& \mathcal{Q}(\theta_{t-1},\omega_{t-1}) \\
&- \eta \| \nabla_{\theta}\mathcal{Q}(\theta_{t},\omega_{t-1})\|^2_2 \\
\leq& \mathcal{Q}(\theta_{t-1},\omega_{t-1}),
\end{aligned}
\end{equation}
where $\eta := \mu (1-0.5\beta \mu) > 0$. Therefore, we conclude that  $\mathcal{Q}(\theta_{t},\omega_{t})$ is indeed monotonically decreasing. Since $\mathcal{Q}$ is non-negative, we conclude that $\mathcal{Q}(\theta_t,\omega_t)$ is a convergent sequence. In particular, $\theta_{t}$ is bounded. Otherwise, $\mathcal{Q}(\theta_{t},\omega_{t}) \geq \mathcal{L}_{\mathcal{S}}[h_{\theta_{t}},y] \to \infty$ in contradiction to the fact that $\mathcal{Q}(\theta_{t},\omega_{t})$ is a convergent sequence. 

Next, by applying the convexity of $\mathcal{Q}(\theta,\omega_{t-1})$ (as a function of $\theta$), we have:
\begin{equation}\label{eq:b}
\begin{aligned}
&\mathcal{Q}(\theta_{t},\omega_{t-1}) - \mathcal{Q}(\theta^*_{t},\omega_{t-1}) \\
\leq& \langle \nabla_{\theta} \mathcal{Q}(\theta,\omega_{t-1}) , \theta_{t} - \theta^*_{t} \rangle \\
\leq& \|\nabla_{\theta}\mathcal{Q}(\theta_{t},\omega_{t-1}) \|_2 \cdot \|\theta_{t} - \theta^*_{t} \|_2, \\
\end{aligned}
\end{equation}
where $\theta^*_{t} = \arg\min_{\theta} \mathcal{Q}(\theta,\omega_{t})$. By combining Eqs.~\eqref{eq:a} and~\eqref{eq:b}, we have:
\begin{equation}
\begin{aligned}
&\mathcal{Q}(\theta_{t},\omega_{t}) \\
\leq& \mathcal{Q}(\theta_{t},\omega_{t-1}) 
\leq \mathcal{Q}(\theta_{t-1},\omega_{t-1})\\ 
&- \frac{\eta}{\|\theta_{t} - \theta^*_{t}\|^2_2} \left( \mathcal{Q}(\theta_{t-1},\omega_{t-1}) - \min_{\theta} \mathcal{Q}(\theta,\omega_{t-1}) \right)^2
\end{aligned}
\end{equation}
In particular, 
\begin{equation}
\begin{aligned}
&\mathcal{Q}(\theta_{t-1},\omega_{t-1}) - \mathcal{Q}(\theta_{t},\omega_{t})\\ 
&\geq \frac{\eta}{\|\theta_{t} - \theta^*_{t}\|^2_2} \left( \mathcal{Q}(\theta_{t-1},\omega_{t-1}) - \min_{\theta} \mathcal{Q}(\theta,\omega_{t-1}) \right)^2
\end{aligned}
\end{equation}
Since the left hand side tends to zero and the right hand side is lower bounded by zero, by the sandwich theorem, the right hand side tends to zero as well. Since both $\theta_{t}$ and $\theta^*_{t}$ are bounded sequences ($\{\arg\min_{\theta} \mathcal{Q}(\theta,\omega) \mid \omega \in \Omega\}$ is well-defined and bounded), we conclude that $\lim_{t \to \infty} \mathcal{Q}(\theta_{t},\omega_{t}) = \lim_{t \to \infty} \min_{\theta} \mathcal{Q}(\theta,\omega_{t})$. We also have: $\mathcal{Q}(\theta_{t},\omega_{t}) = \min_{\omega} \mathcal{Q}(\theta_{t},\omega)$ by definition, and therefore, $\lim_{t \to \infty} \mathcal{Q}(\theta_{t},\omega_{t}) = \lim_{t \to \infty} \min_{\omega} \mathcal{Q}(\theta_{t},\omega)$ as well.
\end{proof}

\begin{prop}\label{lem:2}
Assume that $\mathcal{Q}(\theta,\omega)$ is a twice-continuously differentiable, element-wise convex (i.e., convex w.r.t $\theta$ for any fixed value of $\omega$ and vice versa), Lipschitz continuous and $\beta$-smooth function whose saddle points are strict. Let $\theta_t$, $\omega_t$ be the weights produced after $t$ iterations of applying BCGD on $\mathcal{Q}(\theta,\omega)$ with learning rate $\mu < \beta^{-1}$. Then, $(\theta_t,\omega_t)$ converges to a local minima $(\hat{\theta},\hat{\omega})$ of $\mathcal{Q}$ that is also an equilibrium point.
\end{prop}

\begin{proof}
Firstly, since $\mathcal{Q}(\theta,\omega)$ is a twice-continuously differentiable, Lipschitz continuous and $\beta$-smooth function, by Prop.~3.4 and Cor.~3.1 in~\citep{song2017block}, $(\theta_{t},\omega_{t})$ converge to a local minima $(\hat{\theta},\hat{\omega})$ of $\mathcal{Q}$. Therefore, it is left to show that $(\hat{\theta},\hat{\omega})$ is also an equilibrium point. We note that $\mathcal{Q}(\theta,\omega)$ is element-wise convex and $\beta$-smooth. By the proof of (cf.~\citet{10.5555/2670022}, Thm.~2.1.14), we have:
\begin{equation}\label{eq:c}
\begin{aligned}
&\mathcal{Q}(\theta_{t+1},\omega_{t}) \\
\leq& \mathcal{Q}(\theta_{t},\omega_{t}) - \eta \| \nabla_{\theta}\mathcal{Q}(\theta_{t},\omega_{t})\|^2_2,
\end{aligned}
\end{equation}
where $\eta := \mu (1-0.5\beta \mu) > 0$. By applying the convexity of $\mathcal{Q}(\theta,\omega_{t})$ (as a function of $\theta$), we have:
\begin{equation}\label{eq:d}
\begin{aligned}
&\mathcal{Q}(\theta_{t},\omega_{t}) - \mathcal{Q}(\theta^*_{t},\omega_{t}) \\
\leq& \langle \nabla_{\theta} \mathcal{Q}(\theta,\omega_{t}) , \theta_{t} - \theta^*_{t} \rangle \\
\leq& \|\nabla_{\theta}\mathcal{Q}(\theta_{t},\omega_{t}) \|_2 \cdot \|\theta_{t} - \theta^*_{t} \|_2, \\
\end{aligned}
\end{equation}
where $\theta^*_{t} = \arg\min_{\theta} \mathcal{Q}(\theta,\omega_{t})$ and $\omega^*_{t} = \arg\min_{\omega} \mathcal{Q}(\theta_{t+1},\omega)$. By combining Eqs.~\eqref{eq:c} and~\eqref{eq:d}, we have:
\begin{equation}
\begin{aligned}
&\mathcal{Q}(\theta_{t+1},\omega_{t}) \leq \mathcal{Q}(\theta_{t},\omega_{t}) \\
&- \frac{\eta}{\|\theta_{t} - \theta^*_{t}\|^2_2} \left( \mathcal{Q}(\theta_{t},\omega_{t}) - \min_{\theta} \mathcal{Q}(\theta,\omega_{t}) \right)^2
\end{aligned}
\end{equation}
and similarly, we have:
\begin{equation}
\begin{aligned}
&\mathcal{Q}(\theta_{t+1},\omega_{t+1}) \leq \mathcal{Q}(\theta_{t+1},\omega_{t})  \\
&- \frac{\eta}{\|\omega_{t} - \omega^*_{t}\|^2_2} \left( \mathcal{Q}(\theta_{t+1},\omega_{t}) - \min_{\omega} \mathcal{Q}(\theta_{t+1},\omega) \right)^2
\end{aligned}
\end{equation}

In particular, we have:
\begin{equation}\label{eq:beta}
\begin{aligned}
&\mathcal{Q}(\theta_{t+1},\omega_{t+1}) \\
\leq& \mathcal{Q}(\theta_{t},\omega_{t}) \\
&- \frac{\eta}{\|\theta_{t} - \theta^*_{t}\|^2_2} \left( \mathcal{Q}(\theta_{t},\omega_{t}) - \min_{\theta} \mathcal{Q}(\theta,\omega_{t}) \right)^2 \\
&- \frac{\eta}{\|\omega_{t} - \omega^*_{t}\|^2_2} \left( \mathcal{Q}(\theta_{t+1},\omega_{t}) - \min_{\omega} \mathcal{Q}(\theta_{t+1},\omega) \right)^2
\end{aligned}
\end{equation}
Therefore, the sequence $\mathcal{Q}(\theta_{t},\omega_{t})$ is monotonically decreasing. Since $\mathcal{Q}(\theta_{t},\omega_{t})$ is non-negative, it converges to some non-negative constant. By Eq.~\eqref{eq:beta}, we have:
\begin{equation}
\begin{aligned}
&\mathcal{Q}(\theta_{t+1},\omega_{t+1}) - \mathcal{Q}(\theta_{t},\omega_{t})\\
\geq& \frac{\eta}{\|\theta_{t} - \theta^*_{t}\|^2_2} \left( \mathcal{Q}(\theta_{t},\omega_{t}) - \min_{\theta} \mathcal{Q}(\theta,\omega_{t}) \right)^2 \\
&+ \frac{\eta}{\|\omega_{t} - \omega^*_{t}\|^2_2} \left( \mathcal{Q}(\theta_{t+1},\omega_{t}) - \min_{\omega} \mathcal{Q}(\theta_{t+1},\omega) \right)^2
\end{aligned}
\end{equation}
Since the left hand side tends to zero and the right hand side is lower bounded by zero, by the sandwich theorem, the sequences $\frac{\eta}{ \|\theta_{t} - \theta^*_{t}\|^2_2} \left( \mathcal{Q}(\theta_{t},\omega_{t}) - \min_{\theta} \mathcal{Q}(\theta,\omega_{t}) \right)^2 $ and $ \frac{\eta}{\|\omega_{t} - \omega^*_{t}\|^2_2} \left( \mathcal{Q}(\theta_{t+1},\omega_{t}) - \min_{\omega} \mathcal{Q}(\theta_{t+1},\omega) \right)^2$ tend to zero. We note that $\theta_{t}$ and $\omega_{t}$ are convergent sequences, and therefore, are bounded as well. In addition, we recall that the sets $\{\arg\min_{\theta} \mathcal{Q}(\theta,\omega) \mid \omega \in \Omega\}$ and $\{\arg\min_{\omega} \mathcal{Q}(\theta,\omega) \mid \theta \in \Theta\}$ are well-defined and bounded. Hence, the terms $\|\theta_{t} - \theta^*_{t}\|^2_2$ and $\|\omega_{t} - \omega^*_{t}\|^2_2$ are bounded. Thus, we conclude that the sequences $ \left( \mathcal{Q}(\theta_{t},\omega_{t}) - \min_{\theta} \mathcal{Q}(\theta,\omega_{t}) \right)^2 $ and $\left( \mathcal{Q}(\theta_{t+1},\omega_{t}) - \min_{\omega} \mathcal{Q}(\theta_{t+1},\omega) \right)^2$ tend to zero. In particular,
\begin{equation}
\begin{aligned}
\lim_{t \to \infty} \mathcal{Q}(\theta_{t},\omega_{t}) 
&= \lim_{t \to \infty} \min_{\theta} \mathcal{Q}(\theta,\omega_{t}) \\
\lim_{t \to \infty} \mathcal{Q}(\theta_{t+1},\omega_{t})  &= \lim_{t \to \infty} \min_{\omega} \mathcal{Q}(\theta_{t},\omega)
\end{aligned}
\end{equation}
Since $\mathcal{Q}$ is a Lipschitz continuous function, we have: (i) $\lim_{t \to \infty} \mathcal{Q}(\theta_{t+1},\omega_{t}) = \lim_{t \to \infty} \mathcal{Q}(\theta_{t},\omega_{t}) = \mathcal{Q}(\hat{\theta},\hat{\omega})$, (ii) $\lim_{t \to \infty} \min_{\theta} \mathcal{Q}(\theta,\omega_{t}) =  \min_{\theta} \mathcal{Q}(\theta,\hat{\omega})$ and (iii) $\lim_{t \to \infty} \min_{\omega} \mathcal{Q}(\theta_{t},\omega) =  \min_{\omega} \mathcal{Q}(\hat{\theta},\omega)$. Therefore, we finally conclude that $(\hat{\theta},\hat{\omega})$ is an equilibrium point of $\mathcal{Q}$.
\end{proof}

\bibliography{ref}
\bibliographystyle{icml_format/icml2021.bst}